\newtheorem{theorem}{Theorem}
\newtheorem{lemma}[theorem]{Lemma}
\tikzset{
  block/.style = {draw, rectangle, minimum height=2em, minimum width=3em, align=center},
  line/.style = {draw, -{Latex[length=2mm]}}
}
\DeclareMathOperator*{\argmax}{arg\,max}
\newcommand{\transitions}{\Delta}
\newcommand{\trainingpolicy}{{\pi}_{train}}
\newcommand{\agidx}{i}
\newcommand*{\agentstate}{s}
\newcommand*{\agentstates}{S}
\newcommand{\actions}{A}
\newcommand{\action}{a}
\newcommand{\rewards}{R}
\newcommand{\reward}{r}
\newcommand{\timeid}{t}
\newcommand{\utility}{{\utility}^{\agidx}_{\timeid}}
\newcommand{\totaltime}{T}
\newcommand{\return}{G}
\newcommand{\policy}{\pi}
\newcommand{\goalset}{F}
\newcommand{\traj}{\tau}
\newcommand{\optpolicy}{\policy^\star}
\newcommand{\newoptpolicy}{\policy^\star_{\Delta_{new}}}
\newcommand{\newtransitions}{\Delta_{new}}
\newcommand{\mypara}[1]{\vspace{0.2em}\noindent {\bf #1}}
\renewcommand\paragraph{\@startsection{paragraph}{4}{\z@}%
                     {-12\p@ \@plus -4\p@ \@minus -4\p@}%
                     {-0.5em \@plus -0.22em \@minus -0.1em}%
                     {\normalfont\normalsize\bfseries}}
\begin{document}


\begin{frontmatter}


\paperid{468} 


\title{Survival of the Fittest: Evolutionary Adaptation of Policies for Environmental Shifts}

\author[A]{~\snm{Sheryl Paul}\thanks{Corresponding Author. Email: sherylpa@usc.edu}}

\author[A]{~\snm{Jyotirmoy V. Deshmukh}}

\address[A]{University of Southern California}


\begin{abstract}
Reinforcement learning (RL) has been successfully applied to solve the problem
of finding obstacle-free paths for autonomous agents operating in stochastic and
uncertain environments. However, when the underlying stochastic dynamics of the
environment experiences drastic distribution shifts, the optimal policy obtained
in the trained environment may be sub-optimal or may entirely fail in helping
find goal-reaching paths for the agent. Approaches like domain randomization and
robust RL can provide robust policies, but typically assume minor (bounded)
distribution shifts. For substantial distribution shifts, retraining (either
with a warm-start policy or from scratch) is an alternative approach. In this
paper, we develop a novel approach called {\em Evolutionary Robust Policy
Optimization} (ERPO), an adaptive re-training algorithm inspired by evolutionary
game theory (EGT). ERPO learns an optimal policy for the shifted environment
iteratively using a temperature parameter that controls the trade off between
exploration and adherence to the old optimal policy. The policy update itself is
an instantiation of the replicator dynamics used in EGT. We show that under
fairly common sparsity assumptions on rewards in such environments, ERPO
converges to the optimal policy in the shifted environment. We empirically
demonstrate that for path finding tasks in a number of environments, ERPO
outperforms several popular RL and deep RL algorithms (PPO, A3C, DQN) in many
scenarios and popular environments. This includes scenarios where the RL
algorithms are allowed to train from scratch in the new environment, when they
are retrained on the new environment, or when they are used in conjunction with
domain randomization. ERPO shows faster policy adaptation, higher average
rewards, and reduced computational costs in policy adaptation. 


%


\end{abstract}

\end{frontmatter}
\section{Introduction}
A significant challenge for autonomous robotic agents used in automated
warehouses, autonomous driving, and multi-UAV missions is the problem of
identifying the optimal motion policy, i.e., for each state in the environment,
deciding the action that the agent should execute. There are several
computationally efficient approaches for planning the agent's actions in
deterministic and stochastic environments, especially when a model of the
environment is available \citep{uav,agmon_patrol, sven_warehouse,config,
sven_amazon, sven_flatland,sven_airport}. However, such models may not be
available for agents deployed in highly uncertain and dynamic environments \cite{usv, ship}.
Model-free reinforcement learning (RL) algorithms \citep{sutton,bertsekas} have
been highly effective at learning optimal policies when the environment dynamics
are unknown.

Traditional RL methods suffer from their lack of generalizability when
exposed to new, unanticipated changes in the environment. Typically, these RL
approaches demonstrate only moderate resistance to noise and exhibit poor
performance when deployed in environments significantly different from those
encountered during training. The lack of robust adaptation capabilities in these
algorithms is a critical drawback, especially in applications where reliability
and consistency across varied operational conditions are paramount.
\begin{figure}[ht!]
  \centering
  \begin{minipage}[b]{0.35\linewidth}
    \includegraphics[width=\linewidth, trim={0 1cm 1cm 0cm}, clip]{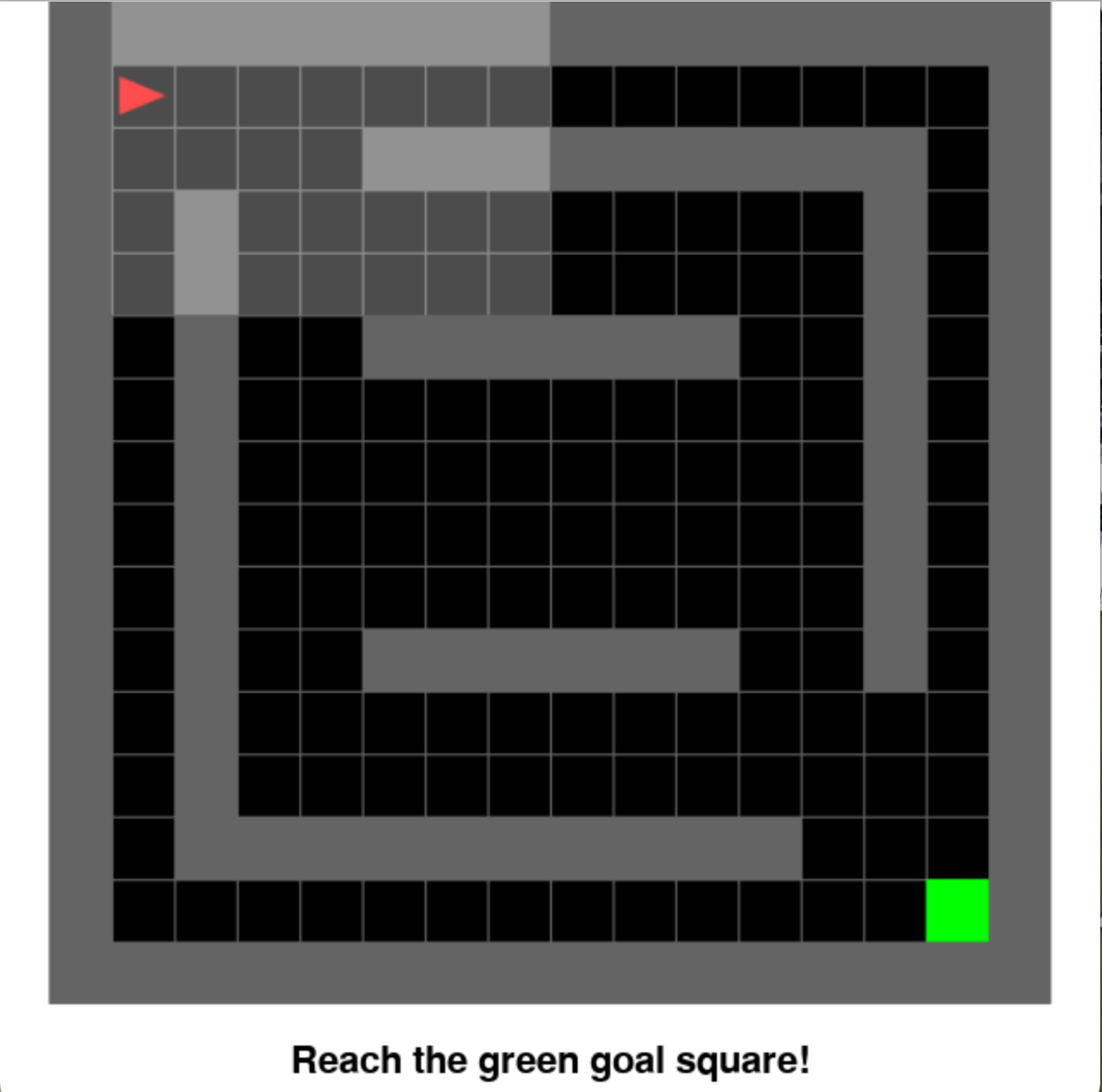}
    \caption*{Original Environment}
  \end{minipage}
  \begin{minipage}[b]{0.35\linewidth}
    \includegraphics[width=\linewidth, trim={0 1cm 1cm 0cm}, clip]{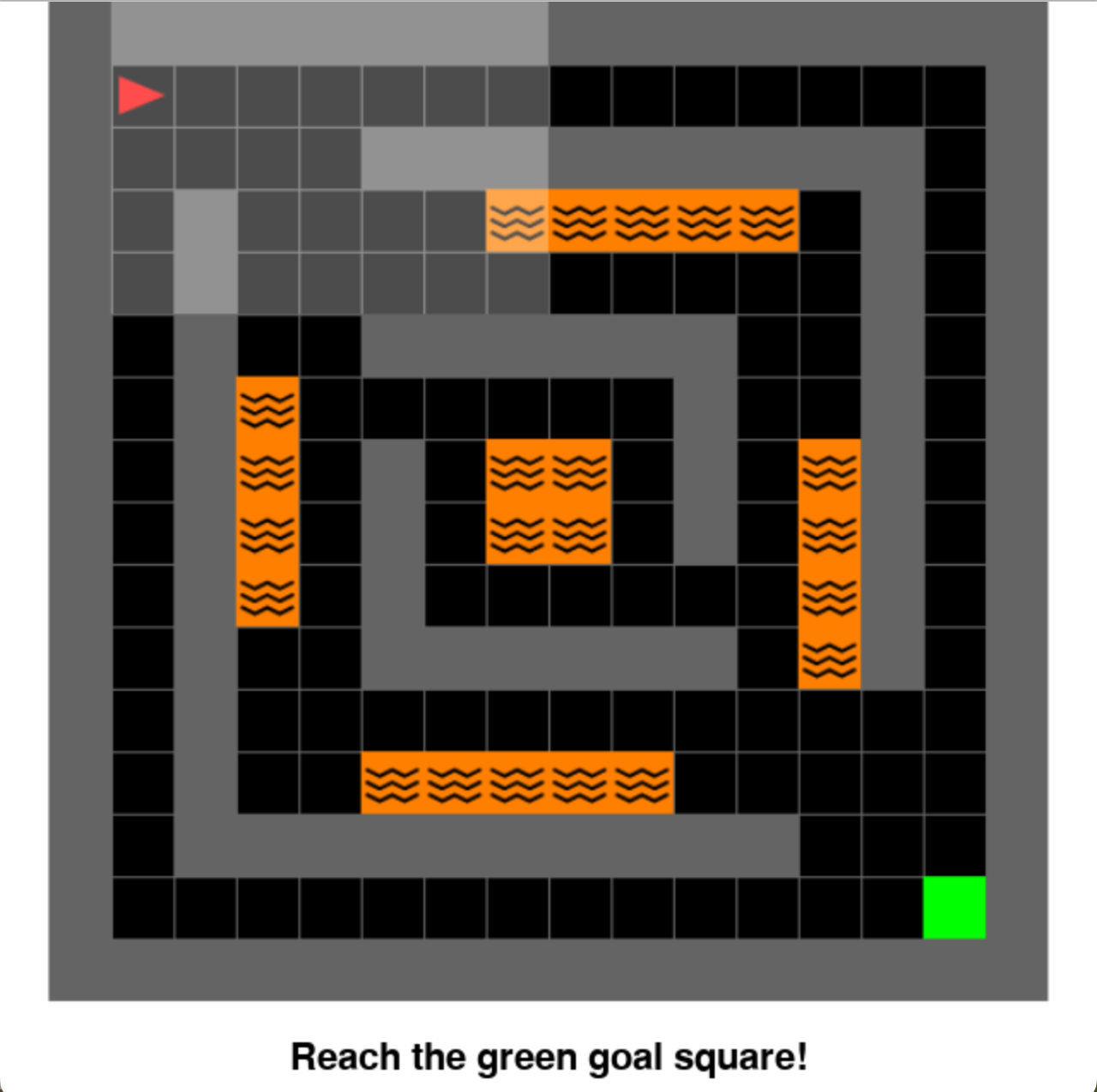}
    \caption*{Post distribution shift}
  \end{minipage}
  \vspace{5mm}
  \caption{The left figure is an example of an original environment where the agent (red triangle) has to reach the green goal square. The right figure represents the same environment after a distribution shift, introducing additional walls and `lava' that complicate navigation and makes it differ significantly from the original layout.}
  \label{fig:env-shift}
\end{figure}
\vspace{5mm}
\\
\mypara{Related Work:} In response to these challenges, several techniques have been developed to enhance the robustness of RL algorithms including domain randomization \cite{ppodr} and distributionally robust reinforcement learning (\citep{ distrrl} \citep{rarl}). Domain randomization  trains models across a wide range of simulated variations, thereby improving the algorithm's immunity to noise and its performance under environmental changes. However, this method generally does not perform well if the changes to the environment are substantial.

Adversarial RL \cite{arrl} and robust reinforcement learning techniques, including approaches like Monotonic Robust Policy Optimization (MRPO), specifically aim to optimize the algorithm’s performance in the worst-case scenarios. These approaches involve training under conditions that include adversarial disturbances or significant noise, thereby preparing the model to handle extreme situations \cite{optadversary,mrpo}. Although these methods significantly enhance the model's resilience, they sometimes fail to provide optimal solutions in less challenging or more typical scenarios indicating a trade-off between general robustness and peak performance

Current approaches in Robust RL (\citep{rarl}) focus on enabling model adaptation
to bridge the gap between simulation and real-world applications. Simulation models 
are generally simplistic and fail to consider environmental variables such as resistance, friction, and various other minor disturbances, so they cannot be directly deployed in risk-averse applications \cite{kadian2020sim2real}.

There is also theoretical work in developing versions of DQN such as DQN-Uncertain Robust Bellman Equation (\citep{urbe}) that focuses on developing Robust Markov Decision Processes (RMDPs) with a Bayesian approach.
Moreover, control theory-inspired approaches train models on subsets of underperforming trajectories. These methods focus on developing policies that exhibit greater durability and resilience in adverse conditions, often by considering worst-case performance guarantees as essential benchmarks for reliability \cite{epopt}, but once again, they suffer from being sub-optimal in many cases.

Approaches in transfer learning with deep reinforcement learning (Deep RL) \cite{transferrl} focuses on leveraging knowledge from previously learned tasks to accelerate learning in new, but related, environments. While this approach promises to improve adaptability and reduce training time, its  shortcomings include difficulties in identifying which parts of knowledge are transferable and the tendency to overfit to source tasks.

So we see that despite these advancements, there remains a substantial gap in effectively adapting pre-trained models to environments that undergo significant and sudden changes. Traditional RL methods are often ill-equipped to handle situations such as alterations in factory floor layouts, unexpected blockages in warehouse paths, or disruptions in road networks due to natural disasters or construction. These scenarios can drastically alter the dynamics of the environment, rendering previous optimal actions ineffective or suboptimal, and thereby demanding either complete retraining of the models or significant adjustments to their parameters and training protocols.

\mypara{Contributions.} To overcome prevalent limitations in traditional reinforcement learning, we introduce a novel method that synergizes RL-based planning with principles from evolutionary game theory. We generate batches of trajectories in a simulated perturbed environment and strategically explore this environment by employing an weighted version of the policy optimal in the original setting, enhancing adaptability. Subsequently, we refine the policy by prioritizing state-action pairs that demonstrate high fitness or returns, drawing on the concept of \textit{replicator dynamics} \citep{mapfegt}. This evolutionary game theory concept has been successfully applied in analyzing both normative and descriptive behaviors among agents \citep{tuyls, wooldridge}. Unlike traditional methods, our approach incrementally modifies the policy with new batches of data without relying on gradient calculations, ensuring { \em convergence to optimality with theoretical guarantees.}

We evaluate our algorithm `Evolutionary Robust Policy Optimization' (ERPO) based on this policy update  against leading deep reinforcement learning methods, including Proximal Policy Optimization (PPO) \citep{ppo}, PPO with Domain Randomization (PPO-DR) \citep{ppodr}, Deep Q-Network (DQN) \citep{dqn}, and Advantage Actor Critic (A2C) \citep{a2c}, both trained from scratch and retrained from a baseline model trained on the original environment environment (denoted as PPO-B, DQN-B, and A2C-B respectively). Our method demonstrates superior performance in various standard gym environments typical in RL research. Focused on discrete state and action spaces, we have applied our model to complex versions of environments such as \cite{towers_gymnasium_2023} \textit{FrozenLake}, \textit{Taxi}, \textit{CliffWalking}, \textit{Minigrid: DistributionShift}, and a challenging \textit{Minigrid} setup featuring walls and lava (\textit{Walls\&Lava}). Our findings reveal that our algorithm not only reduces computation times but also decreases the number of training episodes required to reach performance levels comparable to or better than those achieved by the aforementioned mainstream methodologies.


\section{Preliminaries}
\subsection{Reinforcement Learning}
We model the system consisting of an autonomous agent interacting 
with its environment as a Markov Decision Process (MDP) defined as the 
tuple:
($\agentstates, \actions, \rewards, \transitions,
\gamma $). At each time-step $\timeid$, we assume
that the agent is in some state $\agentstate_\timeid \in \agentstates$, executes an action 
${\action}_{\timeid} \in \actions$, transitioning to the next state ${\agentstate}_{\timeid+1} \in \agentstates$,
and receiving a reward $\reward_{\timeid} \in \rewards$ with a discount factor $\gamma \in (0, 1]$. The transition dynamics
$\transitions$ is a probability of observing the next state ${\agentstate}_{\timeid+1}$ and getting the reward $\reward_{\timeid}$, given that the agent is in state ${\agentstate}_{\timeid}$ and takes the action ${\action}_{\timeid}$.



In this paper, we consider finite time-horizon (denoted as $T$) problems under a {\em stochastic} policy $\policy$, a probability distribution over actions given states, such that the action $\action_\timeid$
is sampled from the distribution $\policy(\action\mid \agentstate = \agentstate_\timeid)$ at any time $\timeid$.
We define $\goalset \subseteq \agentstates $ as the set of goal states in which the agent's task is considered to be achieved.
Now we formalize the sparse reward setting: if $r(s_t, a_t, s_{t+1})$ is the reward received after taking action $a_t$ in state $s_t$. 
\[
r(s_t, a_t, s'_{t+1}) \gg r(s_t, a_t, s_{t+1}).
\]
 \ $\text{where} \ s'_{t+1} \in \goalset, \ \forall s_{t+1} \notin \goalset$.

A trajectory $\traj$ of the agent induced
by policy $\policy$ is defined as a $(\totaltime+1)$-length sequence of 
state-action pairs:
\begin{equation}
\label{eq:traj}
\begin{gathered}
    {\traj} = \{ 
    ({\agentstate}_0,{\action}_0 ), 
    ({\agentstate}_1,{\action}_1 ), 
    \ldots,
    ({\agentstate}_{\totaltime-1},{\action}_{\totaltime-1} ), {\agentstate}_{\totaltime} \}, \\
    \text{where, }
    \forall \timeid<\totaltime: \action_\timeid \sim \policy(\action \mid \agentstate = \agentstate_\timeid), 
    (s_{t+1}) \sim \Delta(s_{t+1} \mid s_t, a_t).
\end{gathered}
\end{equation}
We also donate a trajectory $\traj$ where actions have been sampled using the policy $\pi$ as $\traj \sim \pi$.
Given a trajectory, the total discounted reward of a trajectory is:
\[
{\return^\policy}({\traj}) = 
    \sum_{t=0}^{\totaltime}
    \gamma^t \reward_{t}
\] 
We define the state-value and action-value functions as follows:
\[
v^\pi(s) = \mathbb{E}_{\tau \sim \pi} \left[G^\pi(\tau) | s_0 = s\right]
\]
and
\[
q^\pi(s, a) = \mathbb{E}_{\tau \sim \pi} \left[G^\pi(\tau) | s_0 = s, a_0 = a\right]
\]
Let $\eta (\policy)$ be the expected discounted return for an agent  under the policy $\policy$ across all trajectories
\[
\eta(\pi) = \mathbb{E}_{\tau \sim \pi}\left[\sum_{t=0}^{T-1} \gamma^t r_{t+1}\right]
\]
The optimal policy $ \optpolicy $ for the MDP can then be defined as: 
\[
\optpolicy = \argmax \limits_{\policy} \ \eta (\policy),
\]
\vspace{-15pt}
\subsection{Evolutionary Game Theory}
{\em EGT} originated as the application of game-theoretic concepts to biological settings. This concept stems from the understanding that frequency-dependent fitness introduces a strategic dimension to the process of evolution \citep{stanforddef}.
It models Darwinian competition and can be a dynamic alternative to traditional game theory that also obviates the need for assumptions of rationality from the participating members. It has been applied to modeling complex adaptive systems where strategies evolve over time as those yielding higher payoffs become more prevalent, akin to the survival of the fittest in natural selection. We aim to leverage the principles of EGT \citep{egt_games}, \citep{egt_sandholm} to build an approach for our distribution shift problem.

\subsubsection{Replicator Dynamics Equation:}
The key concept within EGT pertinent to us is that of {\em replicator dynamics}, which describes how the frequency of strategies (or policies in RL) changes over time based on their relative performance.
The classic replicator equation in evolutionary game theory describes how the proportion of a population adopting a certain strategy evolves over time. Mathematically, it is expressed as \citep{replicator}:
\begin{equation}
    x_j(i+1) = x_j(i) \frac{f_j(i)}{\bar{f}(i)} \label{eq:og_rep}
\end{equation}
where $x_j(i)$ represents the proportion of the population using strategy $j$ at time $i$, $f_j(i)$ is the fitness of strategy $j$, and $\bar{f}(i)$ is the average fitness of all strategies at time $i$. The equation indicates that the growth rate of a strategy's proportion is proportional to how much its fitness exceeds the average fitness, leading to an increase in the frequency of strategies that perform better than average.
\newline

\mypara{Problem Definition:}
We assume that we have an optimal policy for the original environment dynamics $\transitions$, referred to as \(\optpolicy_{\transitions}\).

Suppose that we have a new environment with dynamics $\newtransitions$ obtained by significantly perturbing the distribution representing \(\transitions\),\footnotemark{}%
then the problem we wish top solve is to learn a new policy \(\newoptpolicy\) such that
\begin{equation}
    \newoptpolicy\ = \ \argmax\limits_{\policy}\  \eta_{\newtransitions} (\policy).
\end{equation}
\footnotetext{Let $\beta \ = \ \ D_{TV}(\transitions || \transitions_{new})$, i.e. the total variation distance between the transition dynamics of the old and new environments; then $\beta$ is bounded as $\beta \leq \beta_{hi}$.
For our experiments, $\beta_{hi} = 0.4$.}
\vspace{-15pt}
\section{Solution Approach}
\subsection{Translation of the Replicator Equation}

\mypara{Representation of populations and the fitness equivalent:}  We represent the probability distribution over actions in a given state as a population, where each type of population corresponds to a possible action. For a system comprising $n$ states with $m$ possible actions in each state, we effectively have $n$ distinct populations of $m$ types each. This results in $m^n$ different types of individuals in the aggregated state population.

The fitness function measures the reproductive success of strategies based on payoffs from interactions, similar to utility in classical game theory, or how in RL, the expected return measures the long-term benefits of actions based on received rewards. Both serve as optimization criteria: strategies or policies are chosen to maximize these cumulative success measures to guide them towards optimal behavior.
Therefore, in our model, the fitness for a state \( f(s) \) corresponds to the expected return from that state, equivalent to the value function \( v(s) \), and the fitness for a state-action pair \( f(s,a) \) corresponds to the expected return from taking action \( a \) in state \( s \), equivalent to the action-value function \( q(s,a) \). 

Under the assumption of sparse rewards—where significant rewards are received only upon reaching specific states or goals—\( f(s) \) is defined as \( \mathbb{E}[f(\tau_s)] \), the expected return across all trajectories through state \( s \). Likewise, \( f(s,a) \) is defined as \( \mathbb{E}[f(\tau_{(s,a)})] \), the expected return across trajectories involving the state-action pair \( (s,a) \). 
\begin{align}
    q(s,a) &= f(s,a) \approx \mathbb{E}[f(\tau_{(s,a)})],\label{eq:q_sparse} \\
    v(s)   &= f(s) \approx \mathbb{E}[f(\tau_{(s)})] \label{eq:v_sparse}
\end{align}

\mypara{Policy Update Mechanism:} The replicator equation can be adapted to update the probability of selecting certain actions based on their relative performance compared to the average. The adaptation of the replicator equation is as follows:
\begin{equation}
\label{eq:policyupdate}
    \pi^{i+1} (s,a) = \frac{\pi^i (s,a) f(s,a)}{\sum_{a' \in A} \pi^i (s,a') f(s,a')} 
\end{equation}
where $\policy^{i}(\agentstate, \action) $ is the  probability of action $\action$ in state $s$, in the $i^{ith}$ iteration, and $\pi^{i+1}$ represents the policy int he ${i+1}^{th}$ iteration. 


\begin{lemma}
Policy update in Eq.~\eqref{eq:policyupdate} encodes the replicator dynamics in Eq.~\eqref{eq:og_rep}.
    
\end{lemma}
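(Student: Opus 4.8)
The plan is to prove the lemma by exhibiting an explicit dictionary between the quantities in the classical replicator equation~\eqref{eq:og_rep} and those in the policy update~\eqref{eq:policyupdate}, and then checking that substituting this dictionary turns one equation into the other verbatim, while verifying that the update preserves the probability simplex.

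First I would fix an arbitrary state $s$ and regard the conditional distribution $\pi^i(s,\cdot)$ as a single EGT population: the ``types'' are the actions $a \in A$, and the proportion of type $a$ is set to $x_a(i) := \pi^i(s,a)$, so that $\sum_{a \in A} x_a(i) = 1$. Under the representation introduced above (one population of $m$ types per state, giving $n$ populations that evolve independently), it suffices to argue state by state. Next I would use the EGT-to-RL correspondence already motivated through Eqs.~\eqref{eq:q_sparse}--\eqref{eq:v_sparse} to identify the fitness of type $a$ with the action-value, $f_a(i) := f(s,a) = q(s,a)$. The one remaining ingredient is the average fitness: by its definition in EGT, $\bar f(i) = \sum_{a' \in A} x_{a'}(i)\, f_{a'}(i)$, which under the dictionary is exactly $\sum_{a' \in A} \pi^i(s,a')\, f(s,a')$, i.e.\ precisely the denominator of~\eqref{eq:policyupdate}.

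With this dictionary in hand, Eq.~\eqref{eq:og_rep} written for type $a$ reads $x_a(i+1) = x_a(i)\, f_a(i)/\bar f(i)$, and unfolding the identifications gives literally
\[
\pi^{i+1}(s,a) = \frac{\pi^i(s,a)\, f(s,a)}{\sum_{a' \in A} \pi^i(s,a')\, f(s,a')},
\]
which is Eq.~\eqref{eq:policyupdate}. I would then close the argument by checking that the update keeps $\pi^{i+1}(s,\cdot)$ on the simplex: non-negativity holds because $\pi^i(s,a) \ge 0$ and, under the sparse-reward assumption, $f(s,a) \approx \mathbb{E}[f(\tau_{(s,a)})] \ge 0$; normalization holds since $\sum_{a \in A} \pi^{i+1}(s,a) = \big(\sum_{a}\pi^i(s,a) f(s,a)\big)\big/\big(\sum_{a'}\pi^i(s,a') f(s,a')\big) = 1$, provided the denominator is nonzero, which is the case whenever some currently-played action has positive fitness.

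I do not expect hard analytic content here; the argument is essentially a bookkeeping correspondence. The one point that genuinely needs care is the identification of $\bar f(i)$: in the replicator equation $\bar f$ is specifically the \emph{population-weighted} average of the type fitnesses, so one must observe that the RL-side normalizer $\sum_{a'}\pi^i(s,a')f(s,a')$ is not an ad hoc choice but is forced to equal $\bar f(i)$ under the dictionary. The secondary subtlety is well-definedness of the iteration (non-negativity of fitness and a nonzero denominator), which is exactly where the sparse, non-negative reward assumption enters.
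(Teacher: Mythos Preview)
Your proof is correct and follows essentially the same approach as the paper: fix a state, identify the population proportions $x_{a}(i)$ with $\pi^i(s,a)$, identify the type fitness with $f(s,a)$, observe that the population-weighted average fitness $\bar f(i)$ coincides with the denominator of Eq.~\eqref{eq:policyupdate}, and substitute. The paper presents this via an intermediate instantiation of the replicator equation (its Eq.~(\ref{eq:intermediate_policy})) and the remark that the update is a parallel application across states; your additional simplex-preservation check is a harmless extra that the paper does not include in the lemma itself.
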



The proof of the above lemma follows from the observation that for a state $s$, and a specific action $a_i$, the replicator equation \ref{eq:og_rep} in our setting would look like\footnote{Assuming the fitness function is not time-dependent}:
\begin{equation}
    x_{s,a_j}(i+1) = x_{s,a_j}(i) \frac{f(s,a_j)}{\sum_{a \in A} f(s,a) \cdot x_{(s,a)}(i)}
    \label{eq:intermediate_policy}
\end{equation} 
By our representation of policies as populations, we see that $\pi^i(s,a_j)$ is equivalent to $x_{s,a_j}(i)$, and so Eq. (~\ref{eq:policyupdate})
follows from Eq. (~\ref{eq:intermediate_policy}).
 The policy update in Eq.~\eqref{eq:policyupdate} is just a simultaneous
application of parallel replicator equations to all states (being updated in a given iteration).

%

This rule essentially captures the essence of the replicator dynamic by
adjusting the probability of action \( a \) in state \( s \) proportionally to
its performance relative to the average performance of all actions in that
state. The normalizing factor in the denominator ensures that the updated policy
remains a valid probability distribution, aligning with the principle of the
replicator dynamic where strategy frequencies within a population must sum to
one.\footnote{The original replicator equation describes the evolution of
strategy proportions within a population, not the absolute numbers of
individuals employing each strategy. This focus on proportions makes it a
suitable model for normalizing factors in our policy update equation. Therefore,
the replicator dynamic's mechanism, which adjusts strategy frequencies based on
relative fitness, is analogous to adjusting the probability of action selections
in relation to their expected return, thus ensuring that \( \pi(s,a) \) remains
a normalized probability distribution.}


\begin{theorem}
The algorithm (ERPO) that employs the policy update specified in Eq.~\eqref{eq:policyupdate},
ensures that the value of each state monotonically improves with each iteration, converging to an optimal policy under assumptions of sparse rewards.
\end{theorem}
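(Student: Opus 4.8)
The plan is to split the argument into two parts: (i) \emph{monotone improvement}, showing $v^{\pi^{i+1}}(s)\ge v^{\pi^i}(s)$ for every state $s$ and iteration $i$; and (ii) \emph{convergence to optimality}, showing that the resulting monotone value sequence can only stabilize at a policy satisfying the Bellman optimality equations.

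For part (i), the first step is to recognize that, using the identification $f(s,a)=q^{\pi^i}(s,a)$ from Eqs.~\eqref{eq:q_sparse}--\eqref{eq:v_sparse} together with the standard identity $v^{\pi}(s)=\sum_{a}\pi(s,a)\,q^{\pi}(s,a)$, the normalizing denominator in Eq.~\eqref{eq:policyupdate} is exactly $v^{\pi^i}(s)$. Hence the update reads $\pi^{i+1}(s,a)=\pi^i(s,a)\,q^{\pi^i}(s,a)/v^{\pi^i}(s)$, which is well defined because sparse, non-negative goal rewards and reachability of $\goalset$ keep $v^{\pi^i}(s)>0$. The second step is a convexity estimate: the one-step look-ahead value of $\pi^{i+1}$ against the old action-values is
\[
\sum_{a}\pi^{i+1}(s,a)\,q^{\pi^i}(s,a)=\frac{\sum_a \pi^i(s,a)\,\bigl(q^{\pi^i}(s,a)\bigr)^2}{v^{\pi^i}(s)}\ \ge\ v^{\pi^i}(s),
\]
where the inequality is Jensen's inequality (equivalently Cauchy--Schwarz) for the convex map $t\mapsto t^2$ under the distribution $\pi^i(s,\cdot)$, with equality iff every action in the support of $\pi^i(s,\cdot)$ has the same $q$-value. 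The third step is to invoke the policy improvement theorem: since $\sum_a \pi^{i+1}(s,a)\, q^{\pi^i}(s,a)\ge v^{\pi^i}(s)$ for all $s$, iterating this inequality along trajectories (a standard telescoping/performance-difference argument, valid for stationary policies over the finite horizon $T$) yields $v^{\pi^{i+1}}(s)\ge v^{\pi^i}(s)$ for all $s$, the claimed monotonicity.

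For part (ii), the sequence $\{v^{\pi^i}(s)\}_i$ is non-decreasing and bounded above by $v^{\optpolicy}(s)$ (finite under sparse bounded rewards), so it converges; by compactness of the product of action simplices, $\pi^i$ has a limit point $\pi^\infty$, and by continuity of the update map (on the region $v>0$) this $\pi^\infty$ is a fixed point: for every $(s,a)$ either $\pi^\infty(s,a)=0$ or $q^{\pi^\infty}(s,a)=v^{\pi^\infty}(s)$. The final step argues the fixed point is optimal: since all $q$-values stay strictly positive the update never zeroes a positive probability, and the probability of a currently best action $a$ at $s$ satisfies $\pi^{i+1}(s,a)\ge \pi^i(s,a)$ (its $q$-value is at least the average $v^{\pi^i}(s)$), so no best action's mass can vanish in the limit; hence the support of $\pi^\infty(s,\cdot)$ contains an action attaining $\max_a q^{\pi^\infty}(s,a)$, and together with the fixed-point condition this forces $v^{\pi^\infty}(s)=\max_a q^{\pi^\infty}(s,a)$ for all $s$, i.e. $\pi^\infty$ solves the Bellman optimality equation and $v^{\pi^\infty}=v^{\optpolicy}$. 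As the value sequence converges and every limit point has optimal value, $v^{\pi^i}\to v^{\optpolicy}$.

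The main obstacle is this last step: ruling out convergence to a non-optimal boundary fixed point of the replicator map. Monotonicity in part (i) only forbids strict decreases, not getting "stuck" at a corner where a better action has zero probability --- the classical failure mode of replicator dynamics. The argument leans on (a) the sparse-reward assumption for strictly positive $q$-values (no action killed in one step), (b) full support of $\pi^0$ (supplied in ERPO by the temperature/exploration term), and (c) the monotone-mass observation for best actions. Making (c) fully rigorous requires handling that the identity of the best action can change across iterations and that probabilities can tend to zero in the limit without ever being zero; a clean route is to show that for each $s$ the ratio $\pi^i(s,a^\star)/\pi^i(s,a)$ is eventually non-decreasing for any fixed suboptimal $a$, using that $q^{\pi^i}(s,a^\star)-q^{\pi^i}(s,a)$ is bounded away from $0$ once values are near their limits. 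I would flag this as the delicate point, and if a fully general argument proves elusive, retreat to the weaker but still meaningful claim that the \emph{value} sequence converges monotonically to $v^{\optpolicy}$ under the stated assumptions.
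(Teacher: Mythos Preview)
Your proposal is correct, and in fact more careful than the paper's own proof, but it takes a somewhat different route for the monotonicity step. The paper argues as follows: it partitions the action set at each state into $A_h=\{a:q^i(s,a)\ge v^i(s)\}$ and $A_l=\{a:q^i(s,a)<v^i(s)\}$, observes from the rewritten update $\pi^{i+1}(s,a)=\pi^i(s,a)\,q^i(s,a)/v^i(s)$ that probabilities in $A_h$ weakly increase while those in $A_l$ strictly decrease, and concludes from this mass-shift that the new weighted average $\sum_a\pi^{i+1}(s,a)q^i(s,a)$ dominates $v^i(s)$. Your Jensen/Cauchy--Schwarz computation $\sum_a\pi^{i+1}(s,a)q^i(s,a)=\bigl(\sum_a\pi^i(s,a)q^i(s,a)^2\bigr)/v^i(s)\ge v^i(s)$ reaches the same one-step-lookahead inequality more directly and with an explicit equality condition. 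Where you go further is in (a) explicitly invoking the policy improvement theorem to pass from the one-step inequality $\sum_a\pi^{i+1}(s,a)q^{\pi^i}(s,a)\ge v^{\pi^i}(s)$ to the full statement $v^{\pi^{i+1}}(s)\ge v^{\pi^i}(s)$---the paper writes Eq.~(10) with $q^i$ rather than $q^{i+1}$ and does not close that gap---and (b) giving an actual argument for convergence to optimality via compactness, fixed points of the replicator map, and a boundary analysis; the paper simply asserts convergence from monotonicity. Your honest flagging of the boundary-fixed-point issue (a suboptimal action having all the mass) is exactly the known subtlety for discrete replicator dynamics, and the paper does not address it at all.
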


\begin{proof}
We note that our algorithm employs a batched Monte Carlo-style sampling approach, collecting multiple trajectories in each batch. We assume that each batch is sufficiently large to ensure that the estimated values of the $v$ and $q$ functions closely approximate their true values, so that Eq. (\ref{eq:v_sparse}) and Eq. (\ref{eq:q_sparse}) hold. We also assume that each state is visited at least once in each batch.

We define the action-value function and value function under policy \(\pi^i\), accounting for transition probabilities:
\[
q^i(s,a) = \sum_{s' \in S} \Delta(s,a,s') \left(r(s,a,s') + \gamma v^i(s')\right),
\]
\begin{equation}
    v^i(s) = \sum_{a \in A} \pi^i(s,a) q^i(s,a).
    \label{eq:v_q}
\end{equation} 

We now partition the set of actions $A$ into $A_h$ and $A_l$ such that:
\[
A_h = \{a_h \in A \mid q^i(s,a_h) \geq v^i(s)\} 
\]
\[
A_l = \{a_h \in A \mid q^i(s,a_l) < v^i(s)\}.
\]
Splitting into contributions from \( A_h \) and \( A_l \):
\begin{equation}
v^i(s) = \Sigma_{a_h \in A_h} \pi^i(s,a_h) q^i(s,a_h) + \Sigma_{a_l \in A_l} \pi^i(s,a_l) q^i(s,a_l)
\label{eq:v_k}
\end{equation}
Similarly,
\begin{equation}
 v^{i+1}(s) = \Sigma_{a_h \in A_h} \pi^{i+1}(s,a_h) q^i(s,a_h) + \Sigma_{a_l \in A_l} \pi^{i+1}(s,a_l) q^i(s,a_l)
\label{eq:v_k_plus_1}
\end{equation}
By our sparse reward assumption from Eqs. (\ref{eq:q_sparse}), (\ref{eq:v_sparse}) and  (\ref{eq:v_q}) we can state that:
\begin{equation}
    v^i(s)  = f(s) = \sum_{a' \in A} \pi^i (s,a') f(s,a')
\end{equation}
And the policy update equation can now be modified as:
\begin{equation}
    \pi^{i+1}(s,a) = \pi(s,a) \left[\frac{q^i(s,a)}{v^i(s)}\right] \label{eq:new_up}
\end{equation}

By definition:  \( q^i(s,a_h) \geq v^i(s)\) and \(q^i(s,a_l) < v^i(s)\), and from Eq.  (\ref{eq:new_up}) we get:
\[
\pi^{i+1} (s,a_h) \geq \pi^{i} (s,a_h) \ \text{ ; }
\pi^{i+1} (s,a_l) < \pi^{i} (s,a_l)
\]
the updated policy increases the probability of selecting \(a_h\) and decreases the probability of selecting \(a_l\), so from  Eqs. (\ref{eq:v_k}) and (\ref{eq:v_k_plus_1}) we get:
\[
v^{i+1}(s) \geq v^i(s)
\]
Therefore, we show that a policy iteration algorithm based on this update rule guarantees that each state's value monotonically improves, ensuring convergence to the optimal policy.\footnote{
Convergence to an optimal policy in this setting with an evolutionary update is analogous to the  concept of convergence to an evolutionarily stable strategy (ESS). An ESS is a strategy that, if adopted by a population, cannot be invaded by any alternative strategy that is initially rare. This implies that in this setting, once an optimal policy is reached, it cannot be outperformed easily by any other policy (under standard ESS assumptions \cite{stanforddef}) thereby ensuring that the agent's behavior is robust against most changes and variations in the strategy space.}
\end{proof}

\noindent \emph{Remark.} Our algorithm operates in a Markovian framework, meaning state transitions depend only on the current state and action, without influence from past states/actions. Consequently, the replication of strategies and policy/value improvements can be applied to all states independently, where each state-action pair is updated without interference from the updates of the others. This facilitates parallel improvements across all states towards an optimal policy.
\subsection{Evolutionary Robust Policy Optimization (ERPO)}






\begin{figure}[!ht]
    \centering
    \includegraphics[trim=4cm 0cm 7cm 0cm, clip, width=0.5\textwidth]{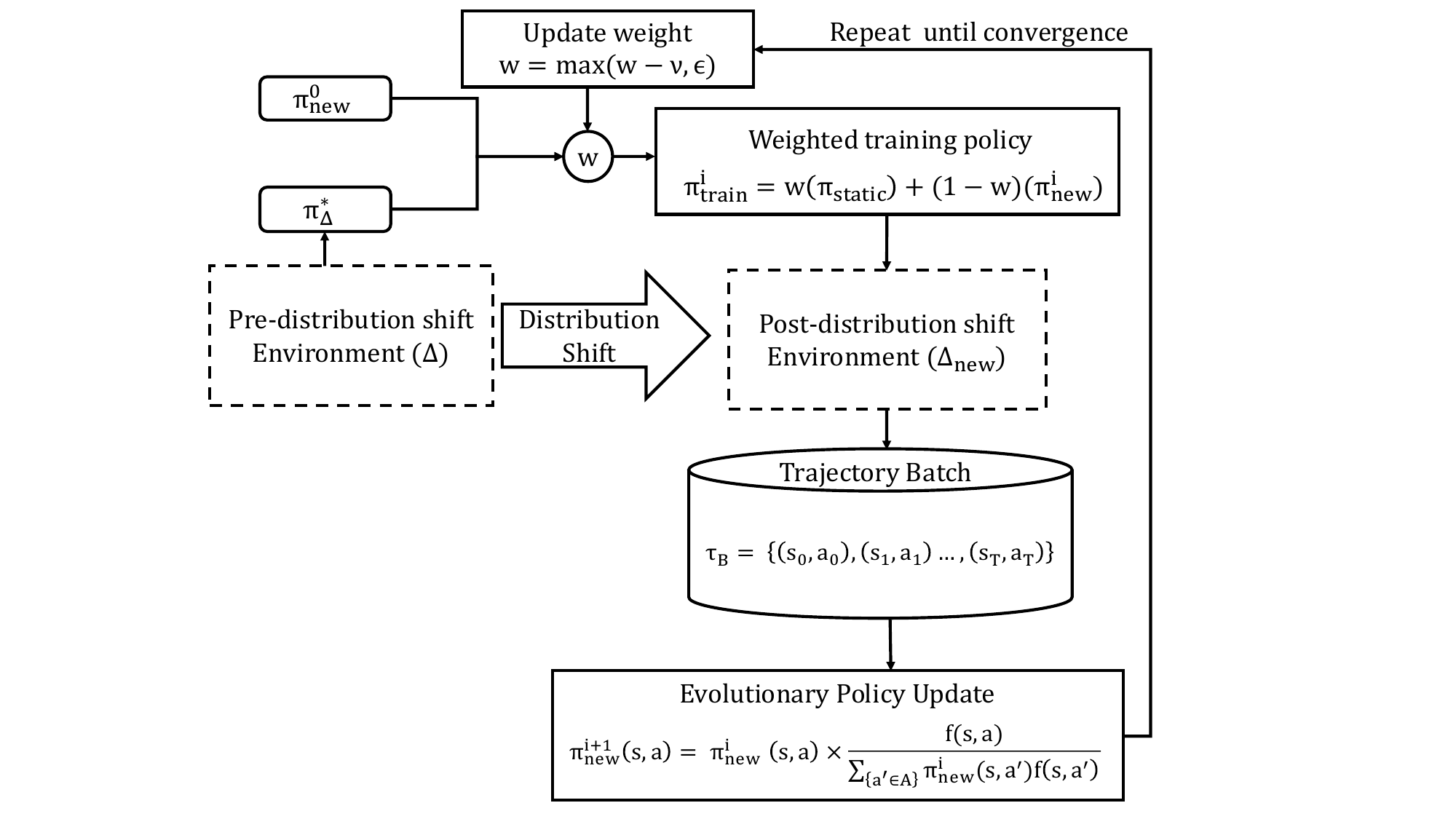}
    \caption{Outline of the ERPO methodology.}
    \vspace{15pt}

    \label{fig:flowchart}
\end{figure}

\begin{algorithm}
\caption{{\sc Evolutionary Robust Policy Optimization}}
\label{algo:erpo_algorithm}
\begin{algorithmic}[1]
\State \textbf{Input:}
\begin{itemize}[nosep, label={}]
    \item Optimal policy $\optpolicy_{\Delta} = \argmax_{\policy} \eta_\transitions (\policy)$
    \item Initialize $\forall \agentstate \in S, \action \in A: \pi^0_{new}(\agentstate, \action) = \frac{1}{|A|}$
    \item Hyperparameters $\epsilon, \nu \in (0,1), \delta > 0$
\end{itemize}
\State \textbf{Output:} Optimized policy $\optpolicy_{new}$
\State Initialize $i \gets 0$, $\eta^0$, $\trainingpolicy^0  \gets w \optpolicy_{\Delta} \ +(1- w)\pi^{i+1}_{new}$ \label{line:init_policy}
\Repeat
    \For{each episode in batch b = 1 to B}
        \State Generate trajectory $\traj_b \sim \pi_{train}$
        \State Append trajectory $\traj_b$ to batch
    \EndFor
    \For{all trajectories $\traj_b$ in batch}
        \For{ each $(s,a) \in \traj_b$}
        \State Update $\pi_{new}^{i+1} (s,a) = \frac{\pi_{new}^i (s,a) f(s,a)}{\sum_{a' \in A} \pi_{new}^i (s,a') f(s,a')}$ \label{line:optimize_policy}
        \EndFor
    \EndFor
    \State Update expected return: $\eta^{i+1} \gets \eta_{\transitions_{new}}(\trainingpolicy^i)$ \label{line:return_update}
    \State Update training policy $\trainingpolicy^{i+1} \gets w \optpolicy_{\Delta} \ +(1- w)\pi^{i+1}_{new}$ 
    \State Decrement $w \gets max( w - \nu, \epsilon)$; 
    \State Increment $i \gets i + 1$  \label{line:update_weight_counter}
    
\Until{Convergence criteria: $(\eta^{i+1} - \eta^i \leq \delta)\ )$}
\State \Return $\pi*_{new} \gets \trainingpolicy$
\end{algorithmic}
\end{algorithm}

 Our algorithm uses batch-based updates: We initialize our training policy to be a weighted combination of the old optimal policy $\optpolicy$, and our new policy $\policy_{new}$ -  which is initially random. We sample trajectories as part of a batch, and in doing so we make sampling assumptions as mentioned earlier, and the state-action pairs in these trajectories are updated according to the update rule. The return for the $i+1^{th}$ iteration is set as the return under the training policy, and the training policy is updated, to take into account the update to $\pi_{new}$. The weight assigned to the old policy is decremented with each iteration. This process is repeated until our termination condition $(\eta^{i+1} - \eta^i > \delta)$ is met. The termination condition checks if the expected return across all states is changing over our batch runs, and when the difference in the expected returns across consecutive batches is minimal, we say that the algorithm has converged.

\section{Experiments}
\mypara{Benchmarks:} In our empirical analysis, we benchmark our approach
against a selection of established reinforcement learning algorithms. The
comparison is conducted under two different scenarios: (1) each RL algorithm is
allowed to train on the modified environment from scratch, (2) we obtain
pre-trained corresponding to the optimal policy, and then train them over
the modified environment\footnote{We use the term `model' to describe the specific
neural network(s) used in each RL algorithm; for example, for PPO, this means the
policy network.}. Each baseline scenario is described in detail below:

\begin{itemize}
  
  \item \textbf{PPO} \cite{ppo}: Standard Proximal Policy Optimization, re-trained from scratch in the new environment.
  
  \item \textbf{DQN} \cite{dqn}: Deep Q-Network, also re-trained from scratch in the new environment.
  
  \item \textbf{A2C} \cite{a2c}: Advantage Actor-Critic, re-trained from scratch in the new environment.
  
  \item \textbf{PPO-B, DQN-B, A2C-B}: These baselines correspond to models that are trained by warm-starting the
  training in the new environment with the old optimal policy.
  
  \item \textbf{PPO-DR} \cite{ppodr}: Proximal Policy Optimization with Domain Randomization, enhances robustness by training across varied environments.
\end{itemize}

Implementations for these algorithms were sourced from the `stable-baselines3' library \cite{stable-baselines3}, with hyperparameter optimization facilitated by the `optuna' library \cite{optuna}. All experiments were conducted on a high-performance computing cluster\footnote{The computational resources included nodes with dual 8-16 core processors, 16 CPUs, and 32GB of memory per node.}.

\mypara{Environments:}
All the benchmarks are tested on the {\em FrozenLake}, {\em CliffWalking}, and {\em Taxi} environments in Open AI gymnasium,  {\em Minigrid's Distribution Shift} environment \cite{Minigrid} and a version of the {\em Minigrid: Empty} environment customized with walls and lava.
We remark that we use larger and more complex versions of the standard environments to test our algorithm properly. More precisely we vary the Total Variation Distance between $\Delta$ and $\Delta_{new}$ i.e. the transition dynamics of the old and new environments between 0.15 and 0.4.
\begin{enumerate}[nosep,wide, labelwidth=!, labelindent=0pt]
    \item {\em FrozenLake (\textbf{FL}):} The agent tries to navigate across a Frozen Lake to reach a goal. The episode terminates when the agent enters a hole and drowns, or reaches the goal. We have a base model with few holes and three additional levels with increasing grid environment occupied by holes, indicated with the darker blue in Fig.~\ref{fig:frozen_lake_envs}.
    \item {\em CliffWalking (\textbf{CW})}: The agent starts on the bottom left and must reach the goal location (bottom right) while avoiding  `cliff' locations (indicated in brown), otherwise it is returned to the start position. We have a base model with one row of cliffs (similar to the standard model), and three additional levels with increasing cliff area ( Fig.~\ref{fig:cw_envs}). 
    \item { \em Taxi (\textbf{TX})}: The agent (a taxi) must pick up and drop a passenger from designated stations (indicated in boxes of red, green, blue and yellow). Dividers prevent the taxi from turning left or right, forcing it to take a more circuitous route and make U-turns. The base model has no dividers, and additional levels have increasing numbers of dividers (see Fig. ~\ref{fig:taxi_envs}). 
    \item { \em Minigrid: DistributionShift (\textbf{MGDS})}: The purpose is to test the ability to generalize across two variations of the environment. The episode terminates when the agent reaches the goal or lava. We have three levels of environments with increasing grid areas occupied by lava. See Fig.~\ref{fig:ds_envs}.
    \item {\em Minigrid: Walls\&Lava (\textbf{MGWL})}: Lastly, we test the ability to navigate in the presence of walls that block the agent's vision and movement, or lava that terminates the episode, or both. The base model is an empty grid, Level 1 has lava, Level 2 has walls, and Level 3 has both. See Fig.~\ref{fig:mgwl_envs}.
\end{enumerate}

\mypara{Implementation Details:} We assume that the optimal policy in the
original environment is obtained using PPO (this can be changed to other
algorithm). The base PPO model is allowed learns over a minimum of $10^4$ and a
maximum of $10^6$ timesteps -- the number of steps required to converge to
(close to) $\pi^*_{\Delta}$ varies across environments. As we induce
distribution shifts, we need to pick reward functions that are sensible across
all instantiations of any environment. The reward functions for each environment
are standard across all the models of all the algorithms used. Further details
of the reward functions are presented in the results, and other details such as the hyperparameters can be found in the supplementary material. \footnote{\href{https://github.com/sherylpaul/ERPO}{https://github.com/sherylpaul/ERPO}}

\emph{Remark.}  We note that the reward function modification adheres to the
principles outlined in \cite{ngreward}, utilizing transformations of the reward
function that maintain policy invariance. Specifically, the rewards are
structured as potential-based transformations, where the modified reward
function is given by \( r'(s,a,s') = r(s,a,s') + \gamma \Phi(s') - \Phi(s) \),
with \(\Phi\) being a potential function. Despite incorporating scaled rewards,
these transformations preserve the optimality of the policies as the
potential-based adjustments ensure the fundamental characteristics of the
original reward system are maintained. This confirms the robust application of
our model and validates its efficacy across varied and complex reward
structures.
\begin{figure}[!ht]
\centering
    \begin{subfigure}{.24\columnwidth}
        \includegraphics[width=0.9\linewidth]{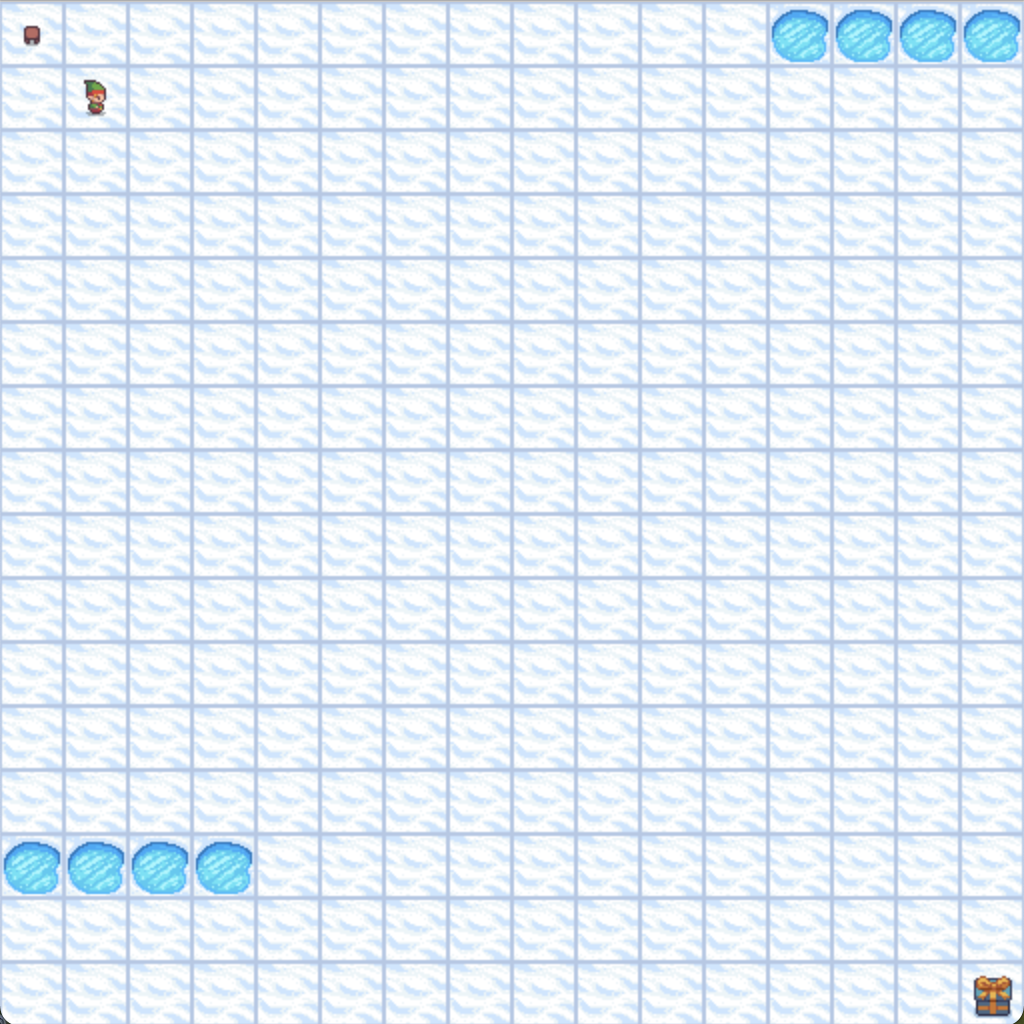}
        \caption{FL: Base}
        \label{fig:fl_s}
    \end{subfigure}%
    \begin{subfigure}{.24\columnwidth}
        \includegraphics[width=0.9\linewidth]{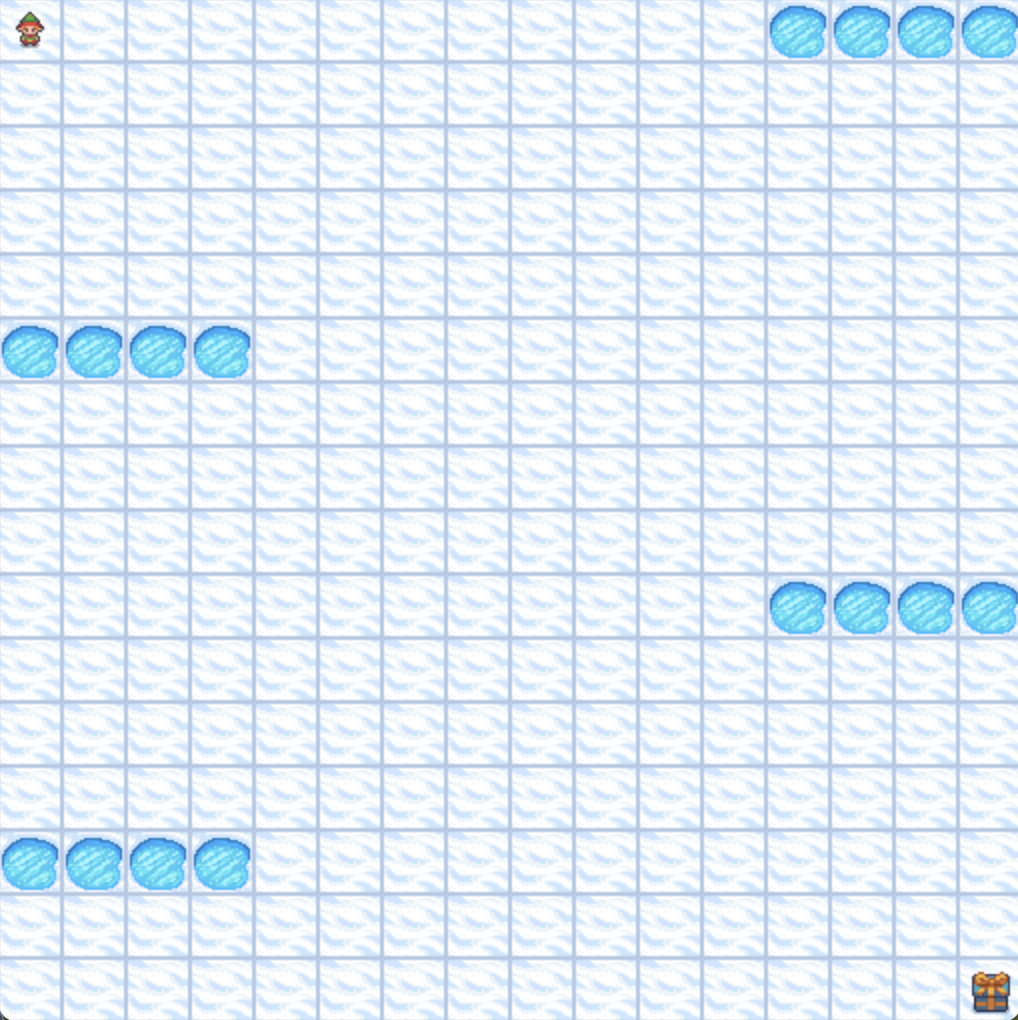}
        \caption{FL: L1}
        \label{fig:fl_l1}
    \end{subfigure}%
    \begin{subfigure}{.24\columnwidth}
        \includegraphics[width=0.9\linewidth]{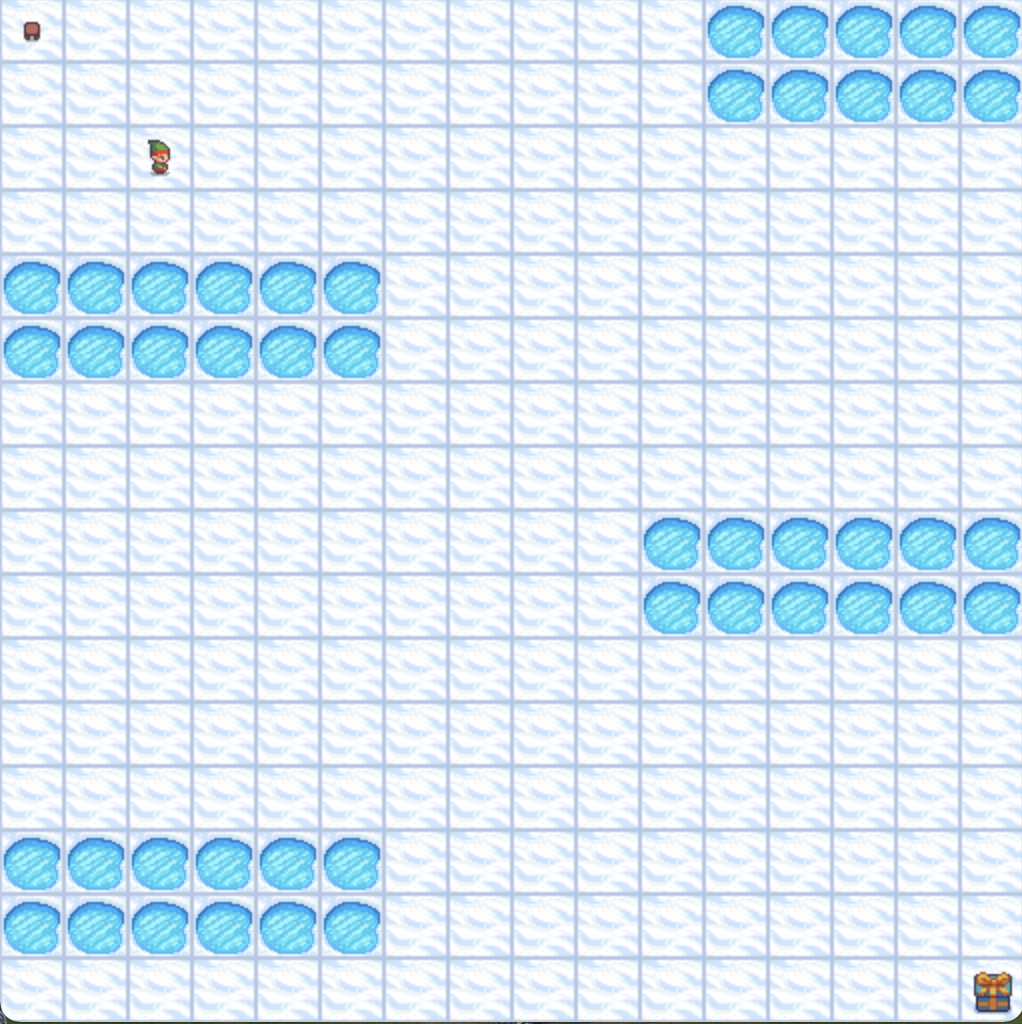}
        \caption{FL: L2}
        \label{fig:fl_l2}
    \end{subfigure}
    \begin{subfigure}{.24\columnwidth}
        \includegraphics[width=0.9\linewidth]{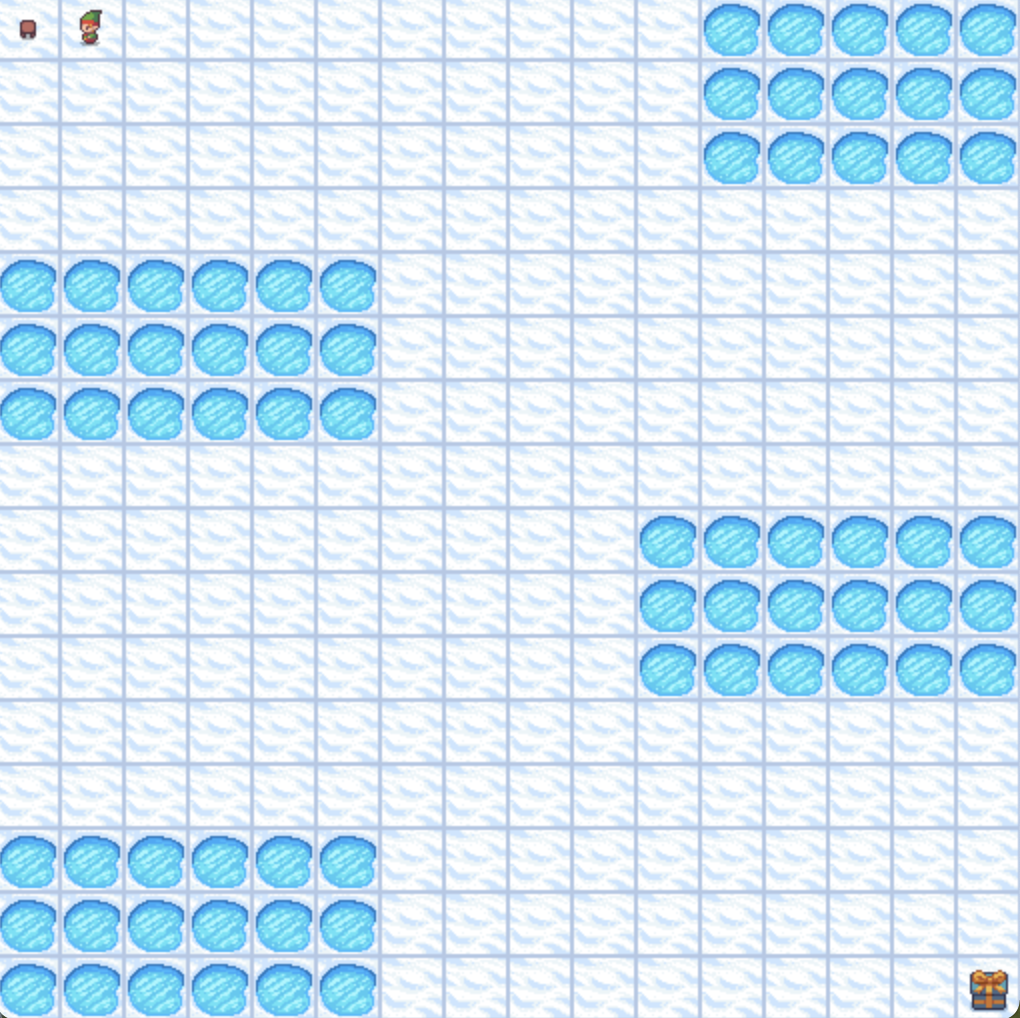}
        \caption{FL: L3}
        \label{fig:fl_l3}
    \end{subfigure}
    \vspace{3mm} 
    \caption{Frozen Lake Environments}
    \label{fig:frozen_lake_envs}
    \vspace{3mm} 
\end{figure}
\begin{figure}[!ht]
    
    \begin{subfigure}{.24\columnwidth}
        \includegraphics[width=0.9\linewidth]{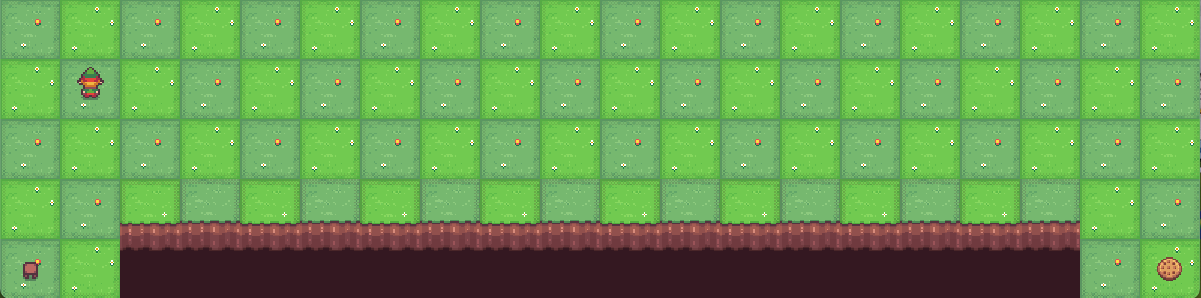}
        \caption{CW: Base}
        \label{fig:cw_s}
    \end{subfigure}%
    \begin{subfigure}{.24\columnwidth}
        \includegraphics[width=0.9\linewidth]{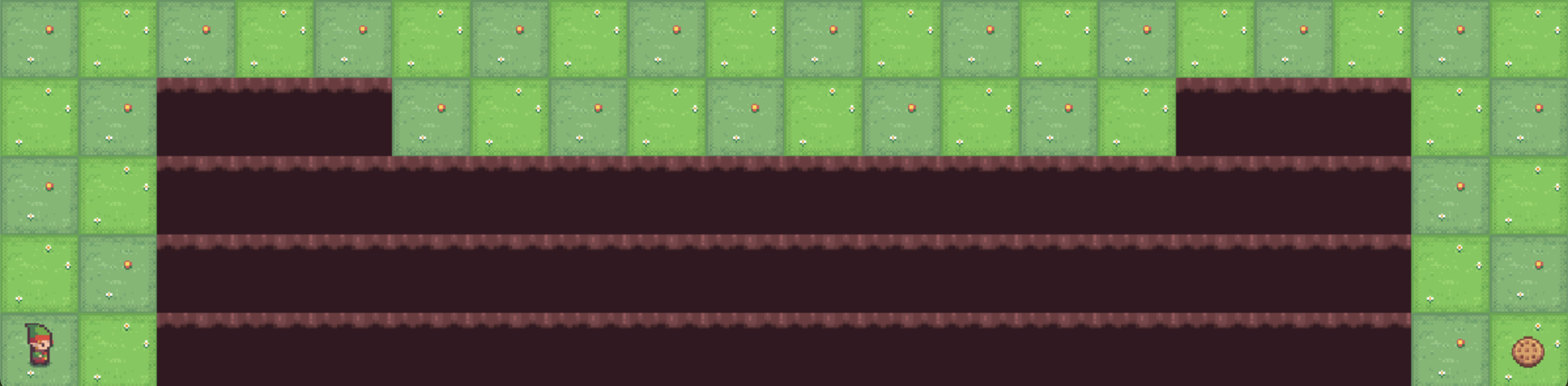}
        \caption{CW: L1}
        \label{fig:cw_l1}
    \end{subfigure}%
    \begin{subfigure}{.24\columnwidth}
        \includegraphics[width=0.9\linewidth]{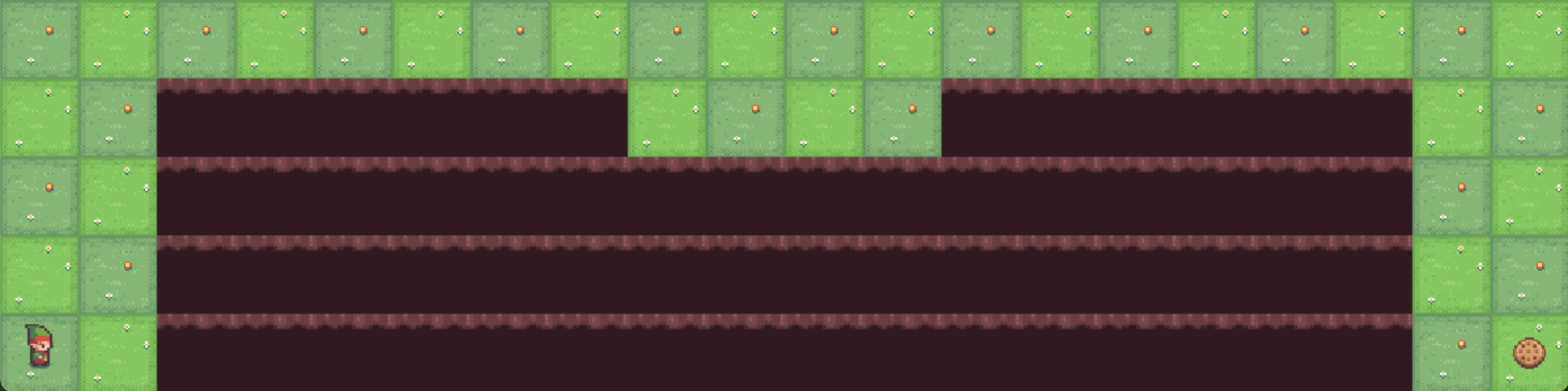}
        \caption{CW: L2}
        \label{fig:cw_l2}
    \end{subfigure}%
    \begin{subfigure}{.24\columnwidth}
        \includegraphics[width=0.9\linewidth]{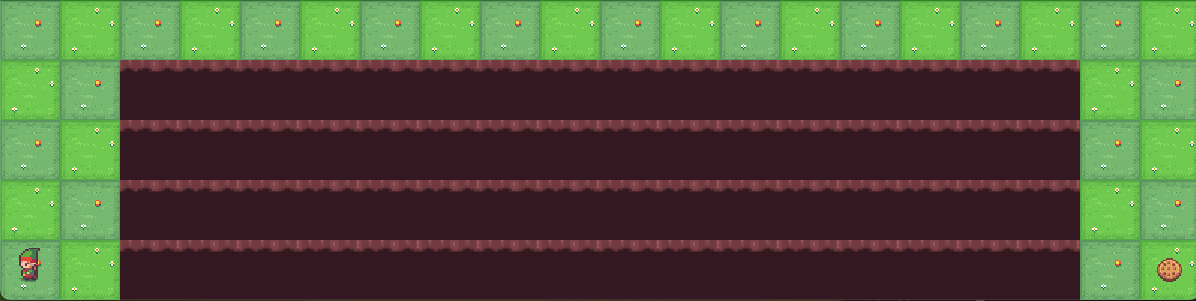}
        \caption{CW: L3}
        \label{fig:cw_l3}
    \end{subfigure}
    \vspace{3mm} 
    \caption{Cliff-Walking Environments}
    \label{fig:cw_envs}
    \vspace{3mm} 
\end{figure}
\begin{figure}[!ht]
    \begin{subfigure}{.24\columnwidth}
        \includegraphics[width=0.9\linewidth]{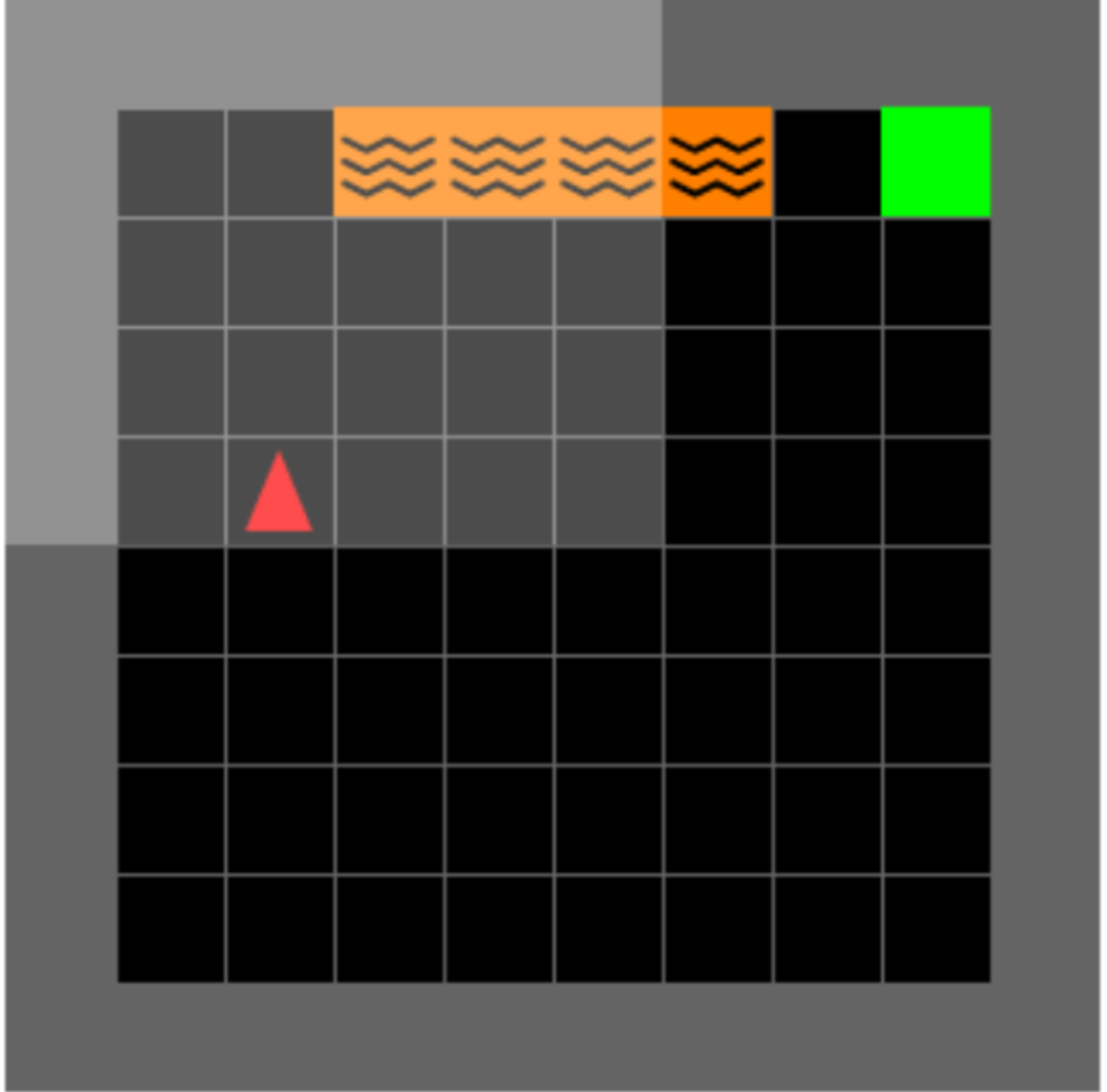}
        \caption{MGDS: B}
        \label{fig:ds_s}
    \end{subfigure}%
    \begin{subfigure}{.24\columnwidth}
        \includegraphics[width=0.9\linewidth]{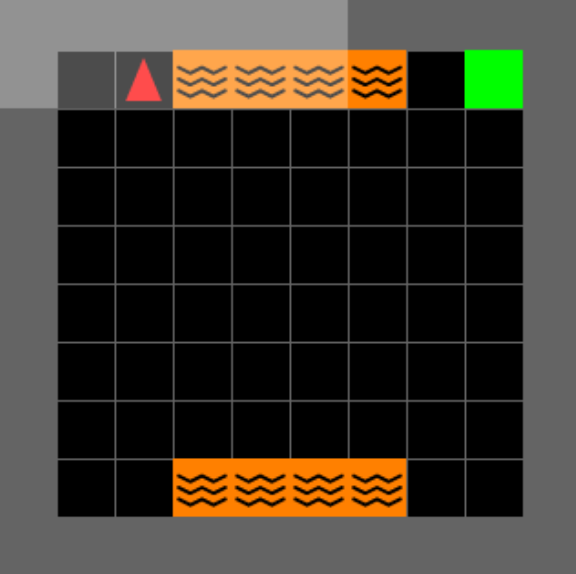}
        \caption{MGDS: L1}
        \label{fig:ds_l1}
    \end{subfigure}%
    \begin{subfigure}{.24\columnwidth}
        \includegraphics[width=0.9\linewidth]{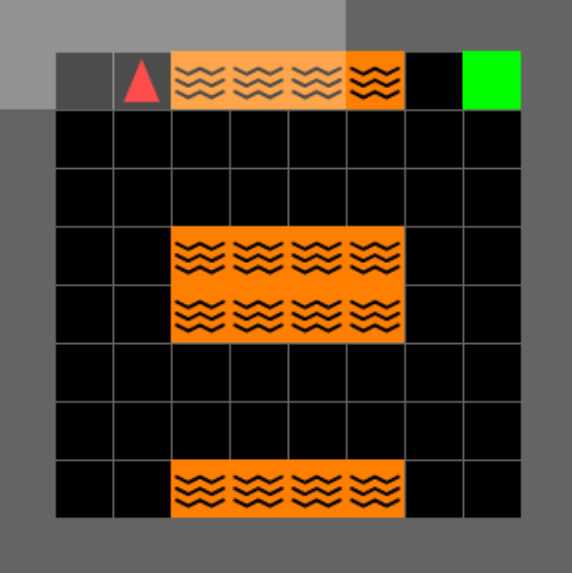}
        \caption{MGDS: L2}
        \label{fig:ds_l2}
    \end{subfigure}%
    \begin{subfigure}{.24\columnwidth}
        \includegraphics[width=0.9\linewidth]{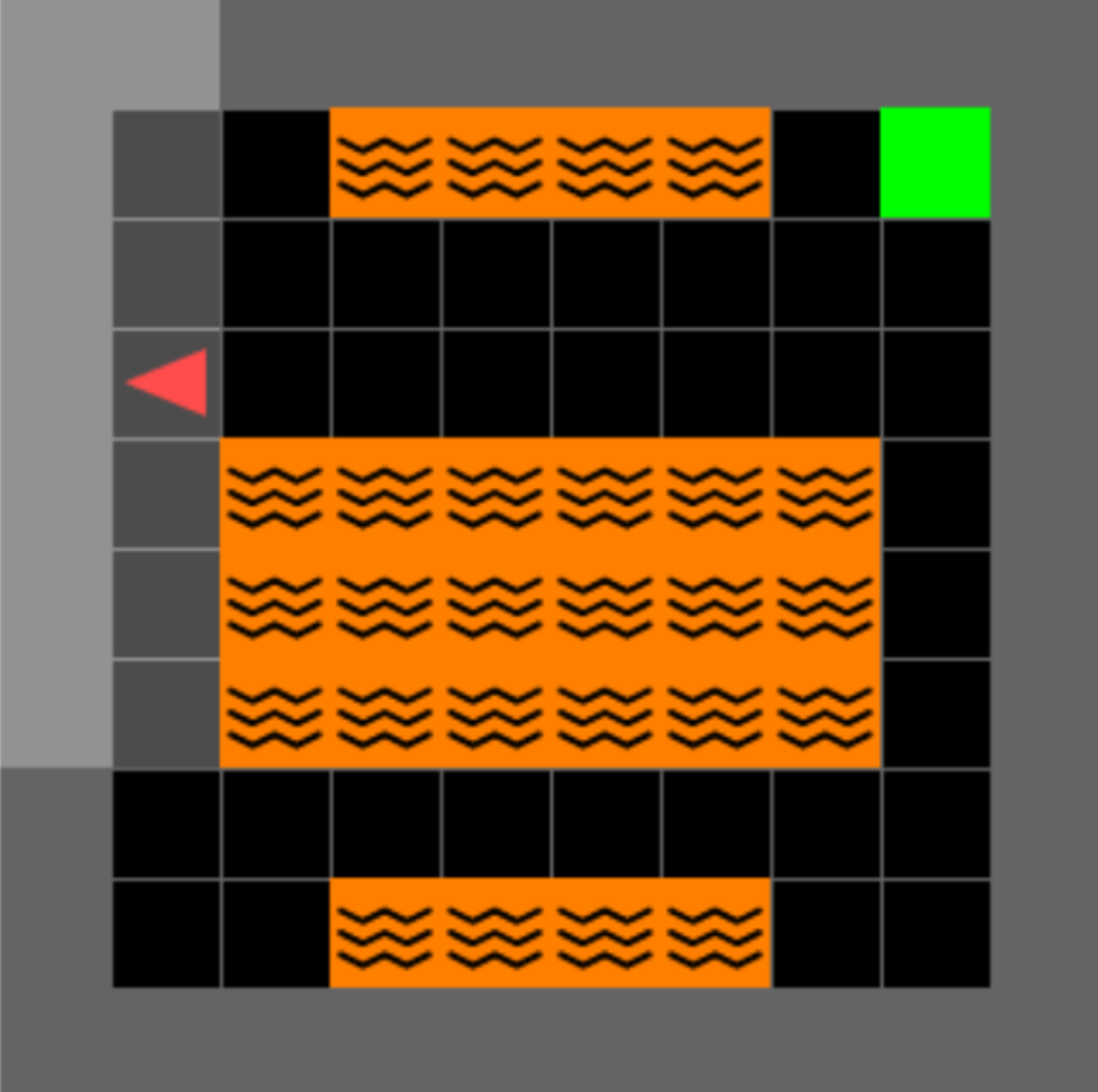}
        \caption{MGDS: L3}
        \label{fig:ds_l3}
    \end{subfigure}
    \vspace{3mm} 
    \caption{Minigrid:DistributionShift Environments}
    \label{fig:ds_envs}
    \vspace{3mm} 
\end{figure}
\begin{figure}[!ht]
    \begin{subfigure}{.24\columnwidth}
        \includegraphics[width=0.9\linewidth]{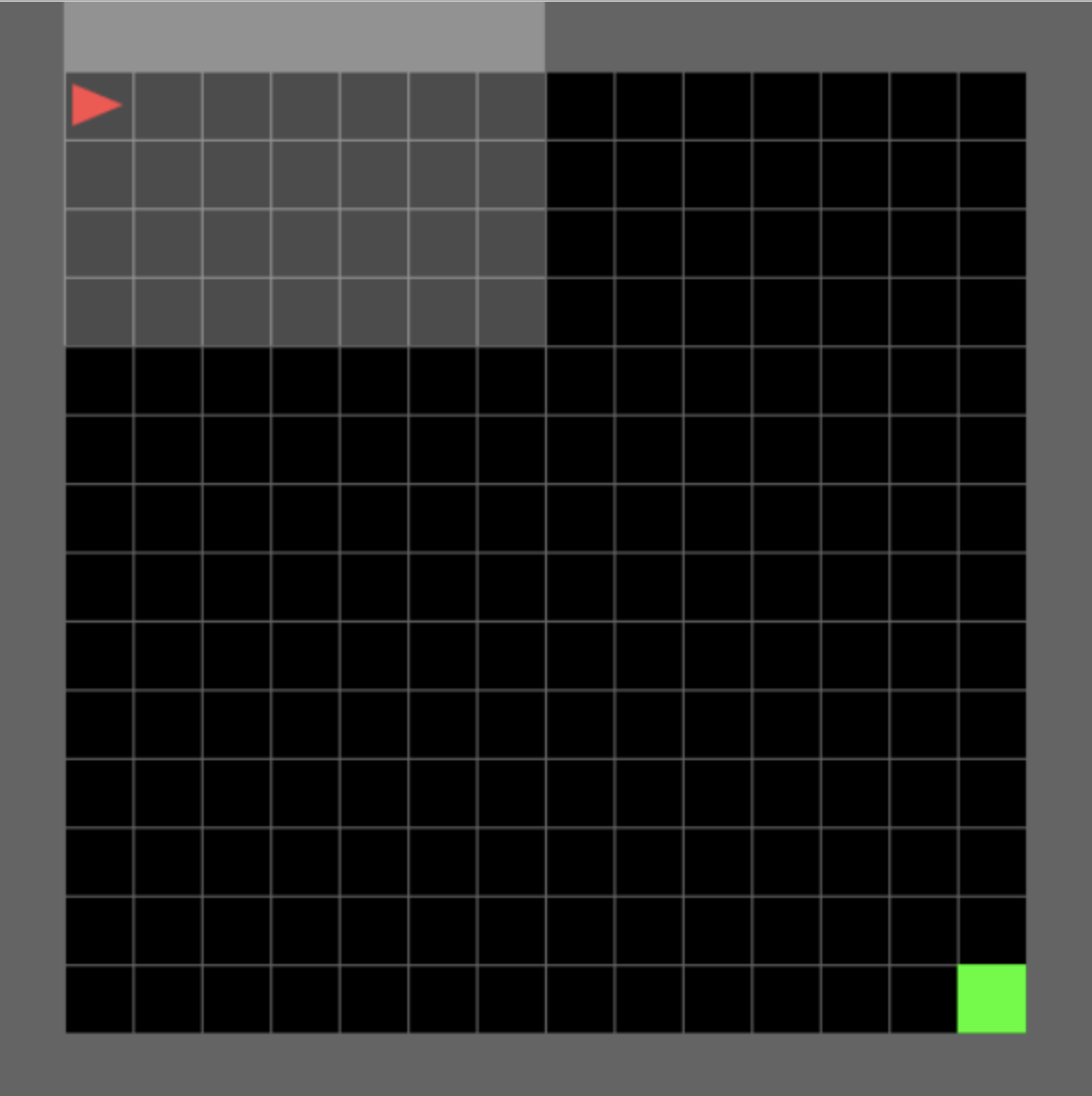}
        \caption{MGWL: B}
        \label{fig:mgwl_s}
    \end{subfigure}%
    \begin{subfigure}{.24\columnwidth}
        \includegraphics[width=0.9\linewidth]{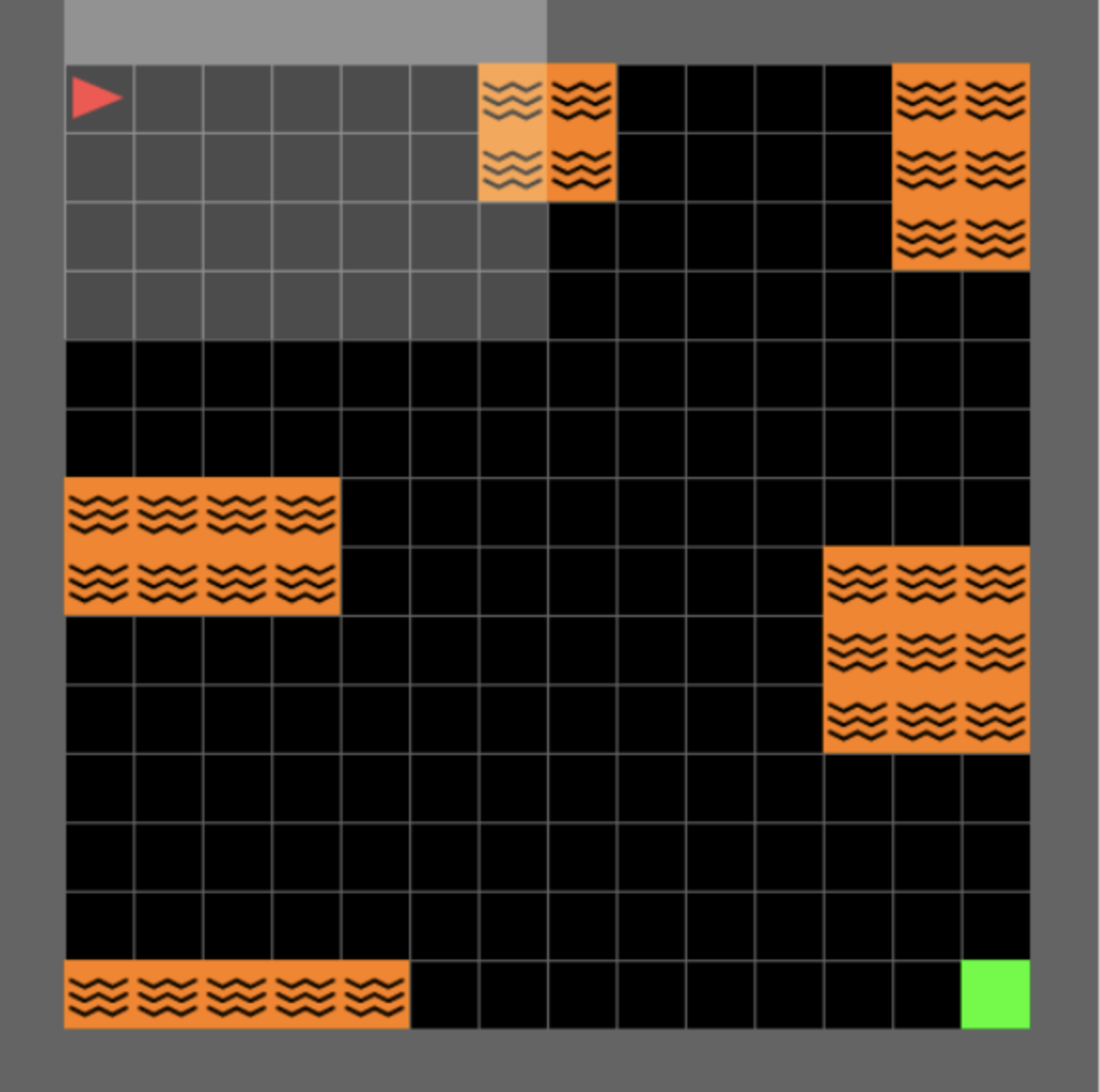}
        \caption{MGWL: L1}
        \label{fig:mgwl_l1}
    \end{subfigure}%
    \begin{subfigure}{.24\columnwidth}
        \includegraphics[width=0.9\linewidth]{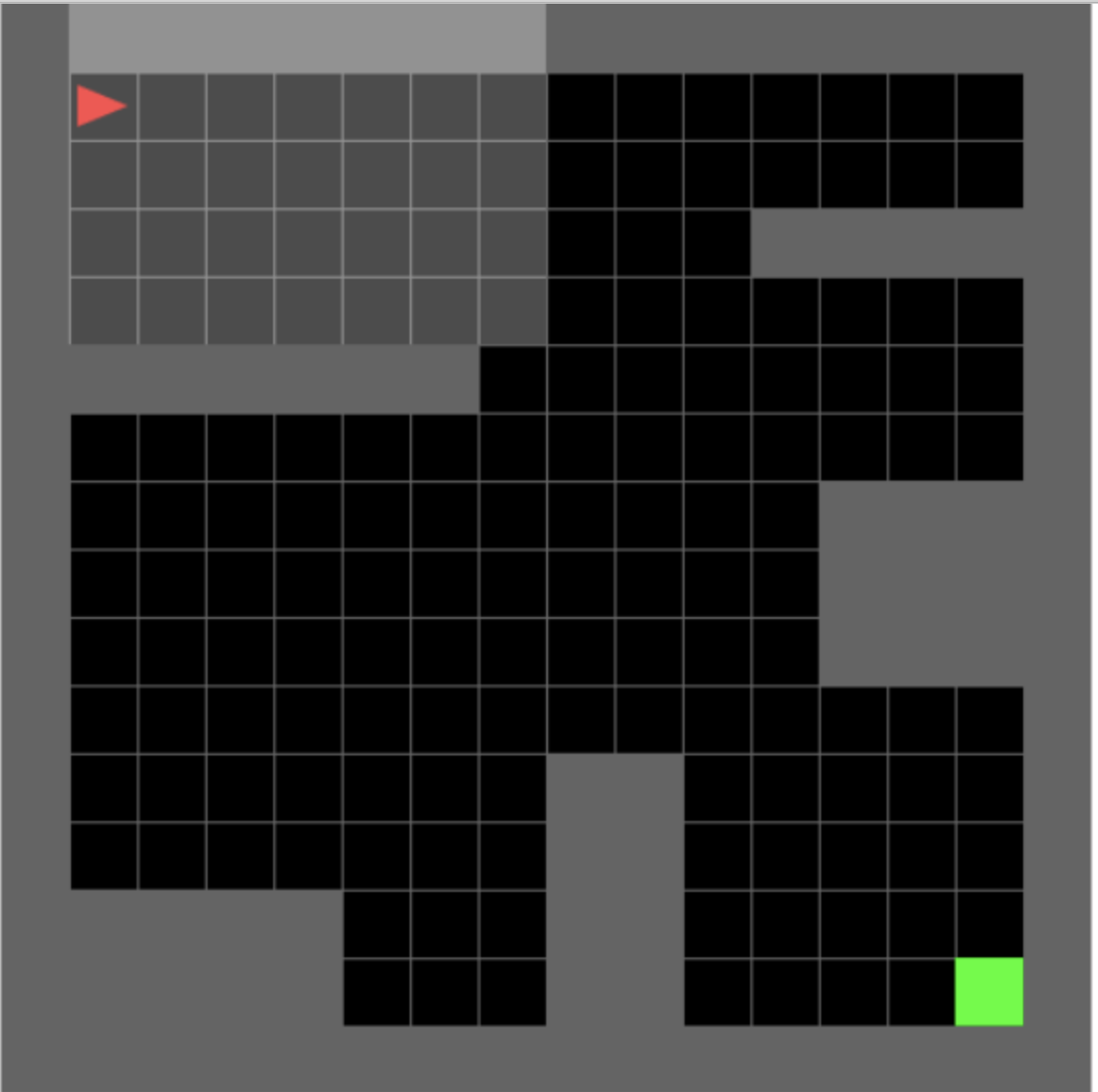}
        \caption{MGWL: L2}
        \label{fig:mgwl_l2}
    \end{subfigure}%
    \begin{subfigure}{.24\columnwidth}
        \includegraphics[width=0.9\linewidth]{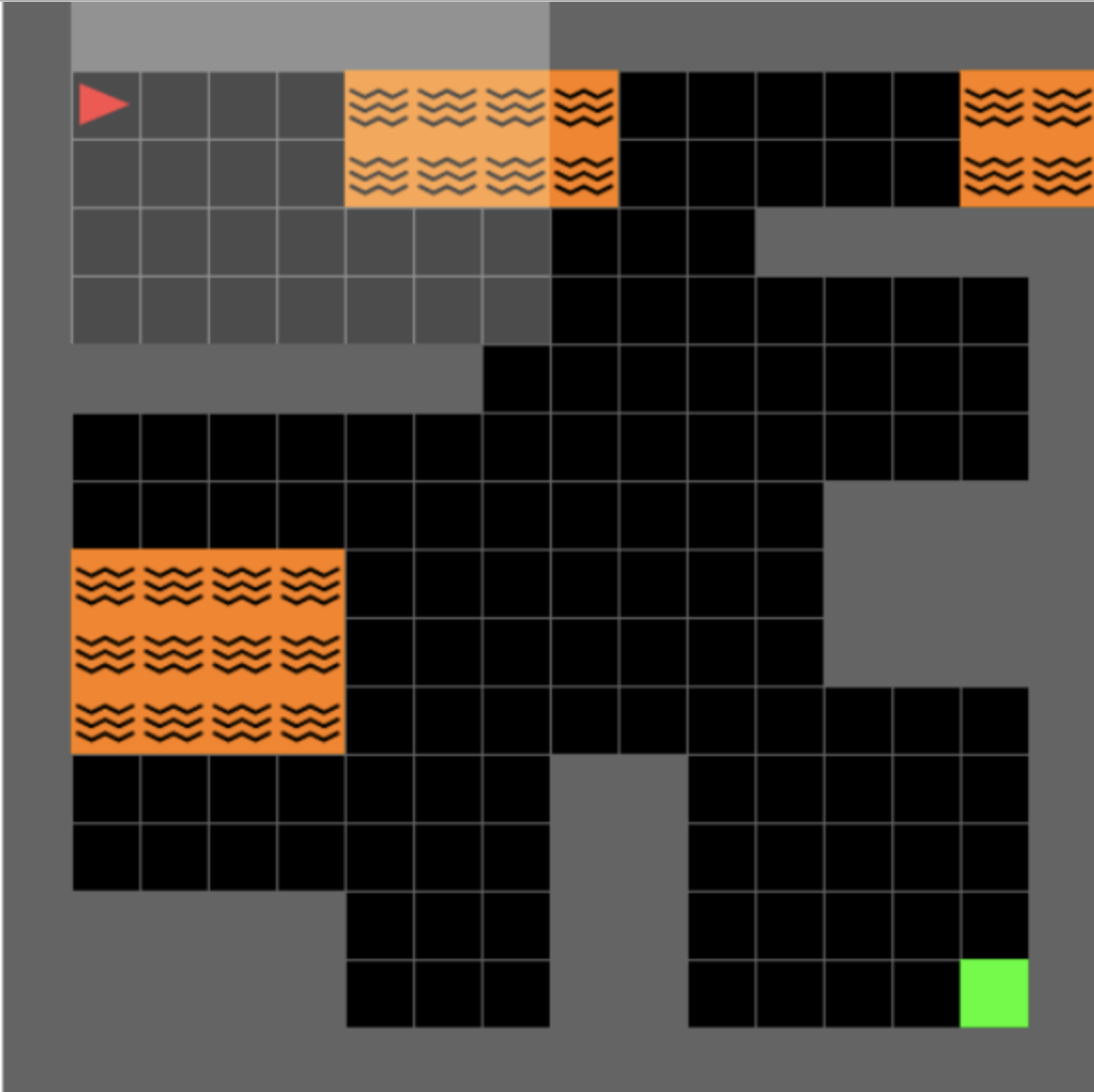}
        \caption{MGWL: L3}
        \label{fig:mgwl_l3}
    \end{subfigure}
    \vspace{3mm} 
    \caption{Minigrid:Walls\&Lava Environments}
    \label{fig:mgwl_envs}
    \vspace{3mm} 
\end{figure}

\begin{figure}[!ht]
    \begin{subfigure}{.24\columnwidth}
        \includegraphics[width=0.9\linewidth]{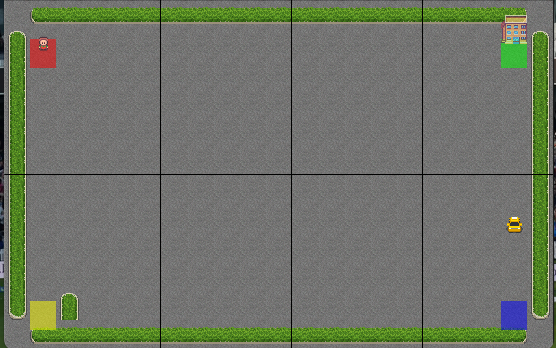}
        \caption{TX: B}
        \label{fig:taxi_s}
    \end{subfigure}%
    \begin{subfigure}{.24\columnwidth}
        \includegraphics[width=0.9\linewidth]{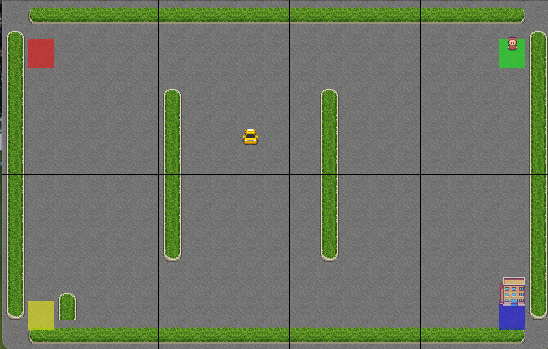}
        \caption{TX: L1}
        \label{fig:taxi_l1}
    \end{subfigure}%
    \begin{subfigure}{.24\columnwidth}
        \includegraphics[width=0.9\linewidth]{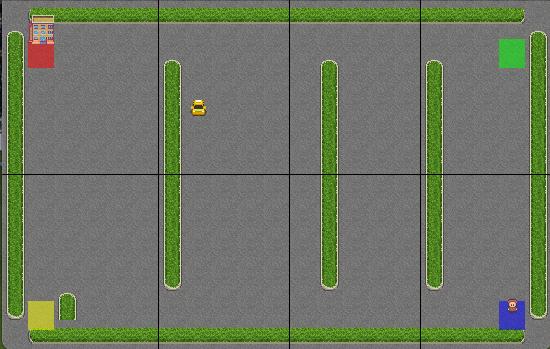}
        \caption{TX: L2}
        \label{fig:taxi_l2}
    \end{subfigure}%
    \begin{subfigure}{.24\columnwidth}
        \includegraphics[width=0.9\linewidth]{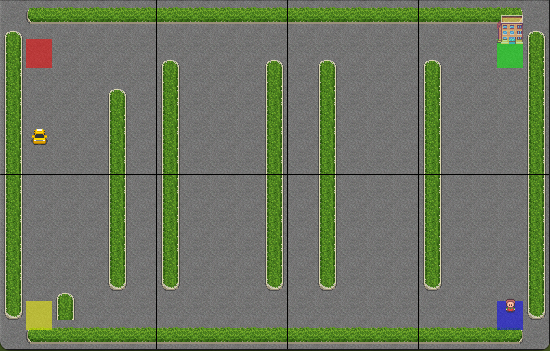}
        \caption{TX: L3}
        \label{fig:taxi_l3}
    \end{subfigure}
    \vspace{3mm} 
    \caption{Taxi Environments}
    \label{fig:taxi_envs}
    \vspace{3mm} 
\end{figure}

\section{Results}
\begin{figure*}[ht!]
\centering
    \begin{subfigure}{.32\textwidth}
        \centering
        \includegraphics[height=3.5cm]{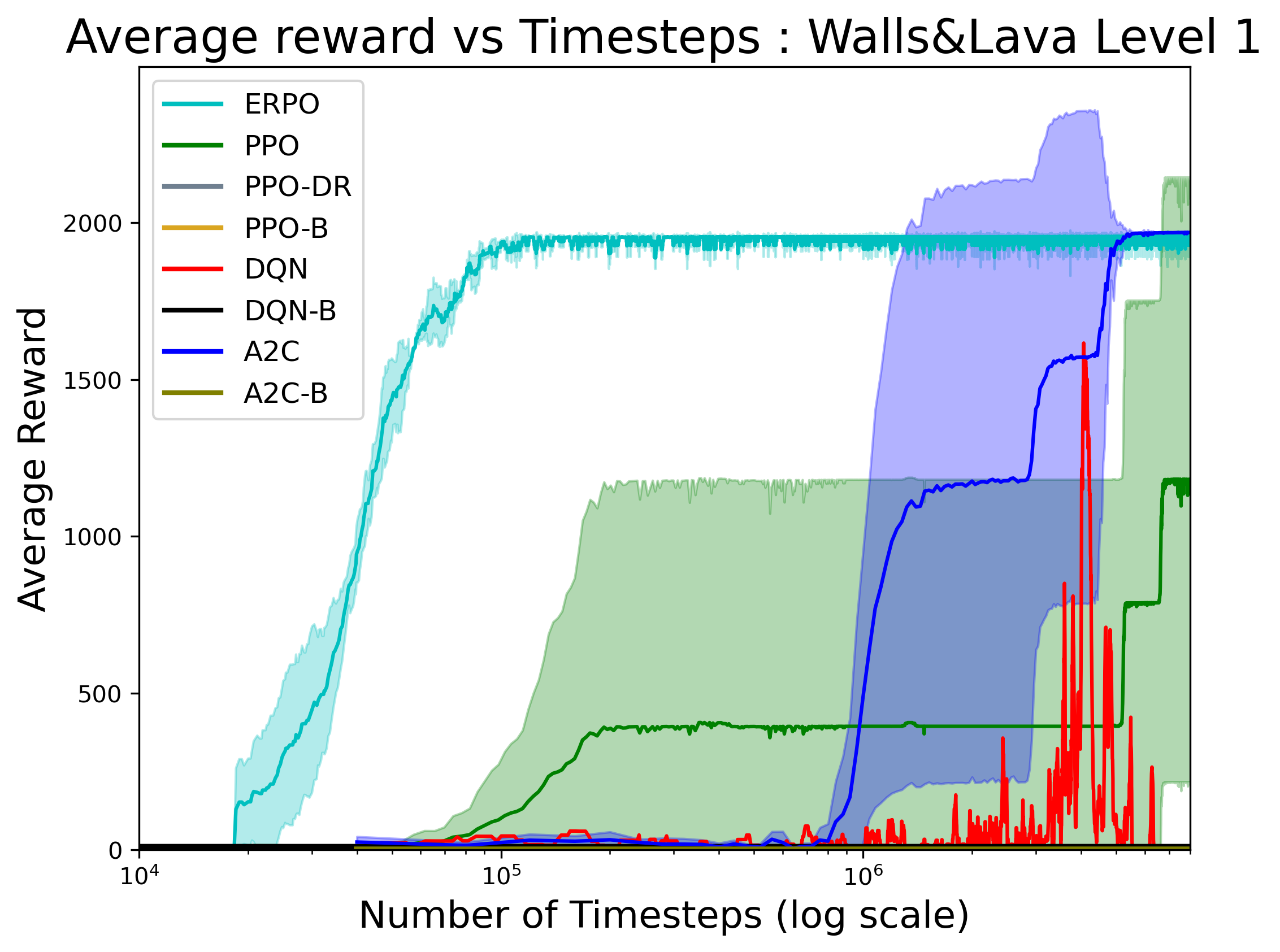}
        \caption{Level 1}
        \label{fig:mgwl_l1}
    \end{subfigure}
    \begin{subfigure}{.32\textwidth}
        \centering
        \includegraphics[height=3.5cm]{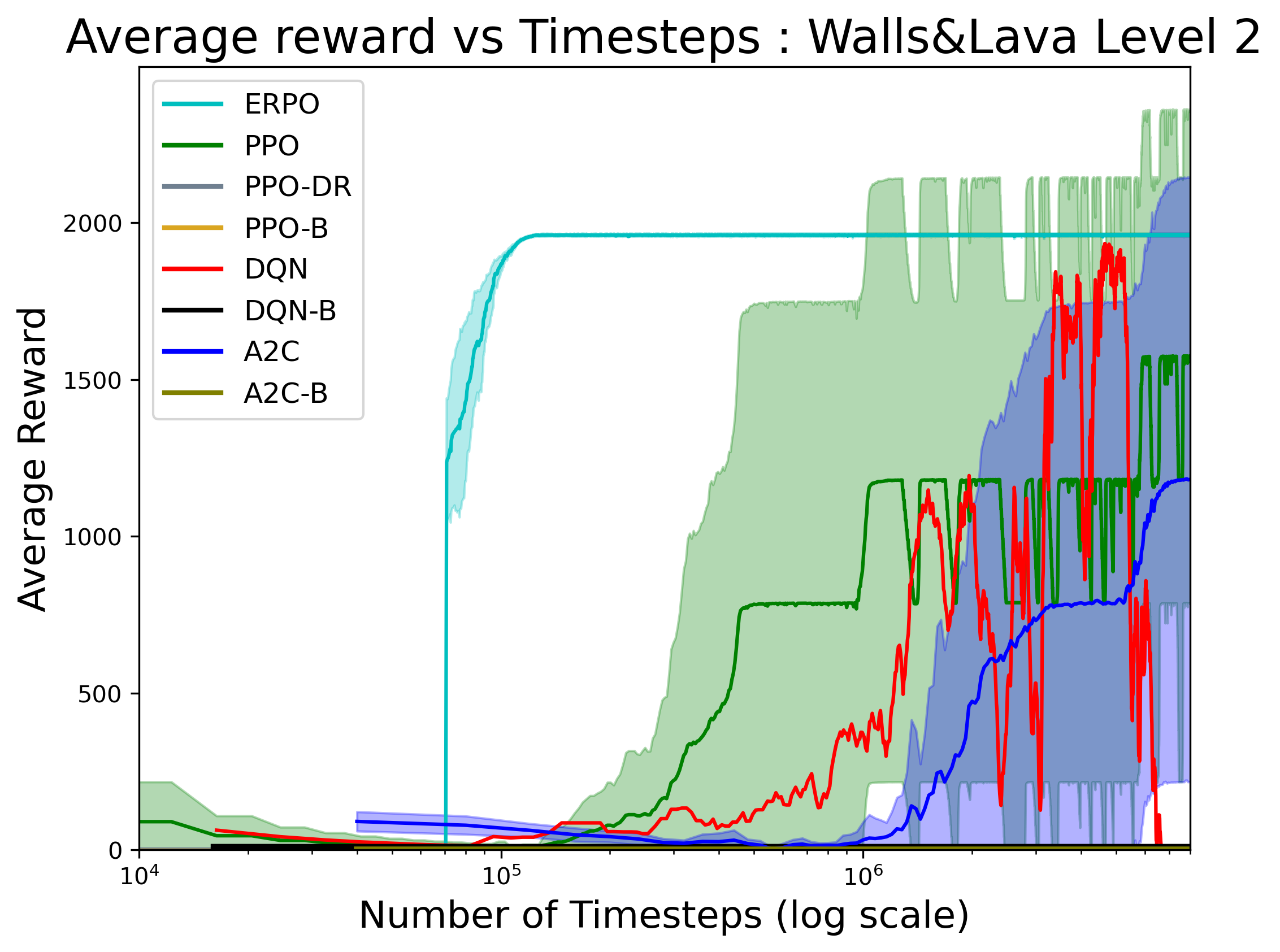}
        \caption{Level 2}
        \label{fig:mgwl_l2}
    \end{subfigure}
    \begin{subfigure}{.32\textwidth}
        \centering
        \includegraphics[height=3.5cm]{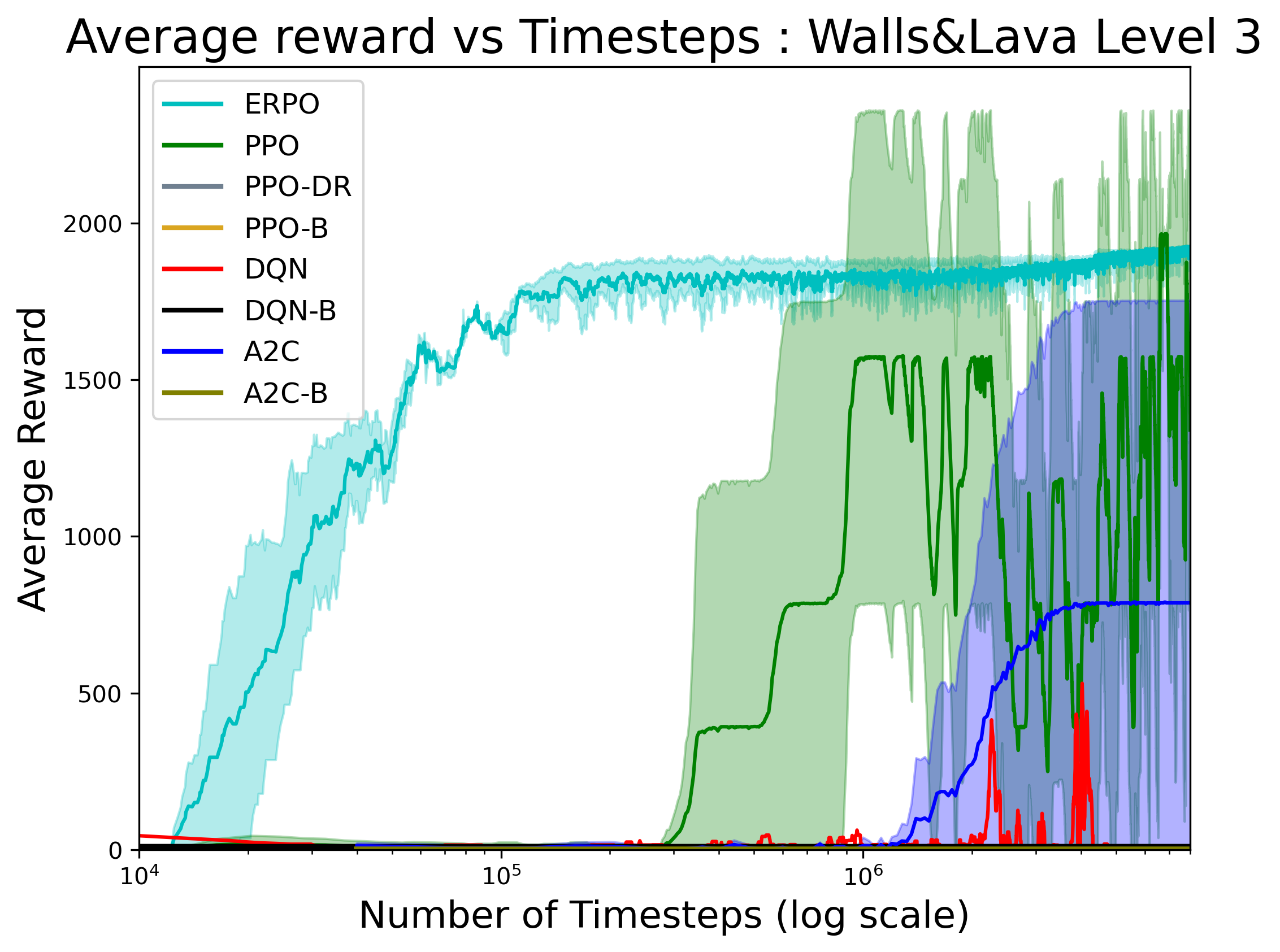}
        \caption{Level 3}
        \label{fig:mgwl_l3}
    \end{subfigure}
        \vspace{3mm}
\caption{Walls\&Lava Environments: Subfigures (a), (b), and (c) indicate the results for levels 1, 2 and 3 (see Fig. 6). The base figure is the original environment, and levels 1 to 3 indicate versions of the environment with certain features altered that induce progressively increasing distribution shift. The agent has to navigate a gridworld of increasing complexity with lava, walls or both. It has 2000 timesteps to reach the goal ($r = 0$ for each step) and gets reward $r = 2000 - t$ upon reaching the goal (where $t$ is the current timestep). ERPO surpasses other models, with steadied performance even as levels increase. } 
\vspace{2mm}
\label{fig:mgwl_results}
\end{figure*}

\begin{figure*}[ht!]
\centering
    \begin{subfigure}{.32\textwidth}
        \centering
        \includegraphics[height=3.5cm]{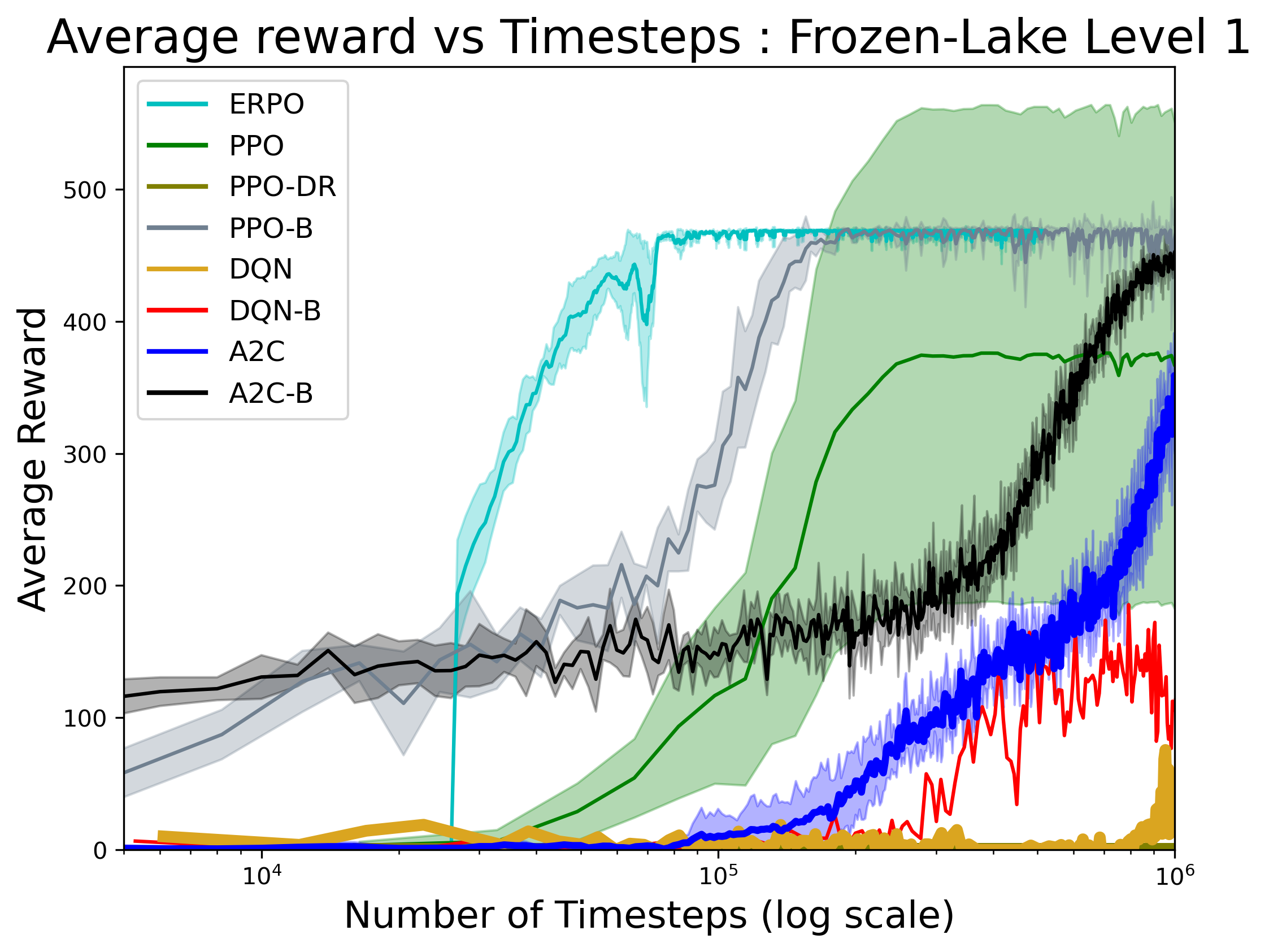}
        \caption{Level 1}
        \label{fig:fl_l1}
    \end{subfigure}
    \begin{subfigure}{.32\textwidth}
        \centering
        \includegraphics[height=3.5cm]{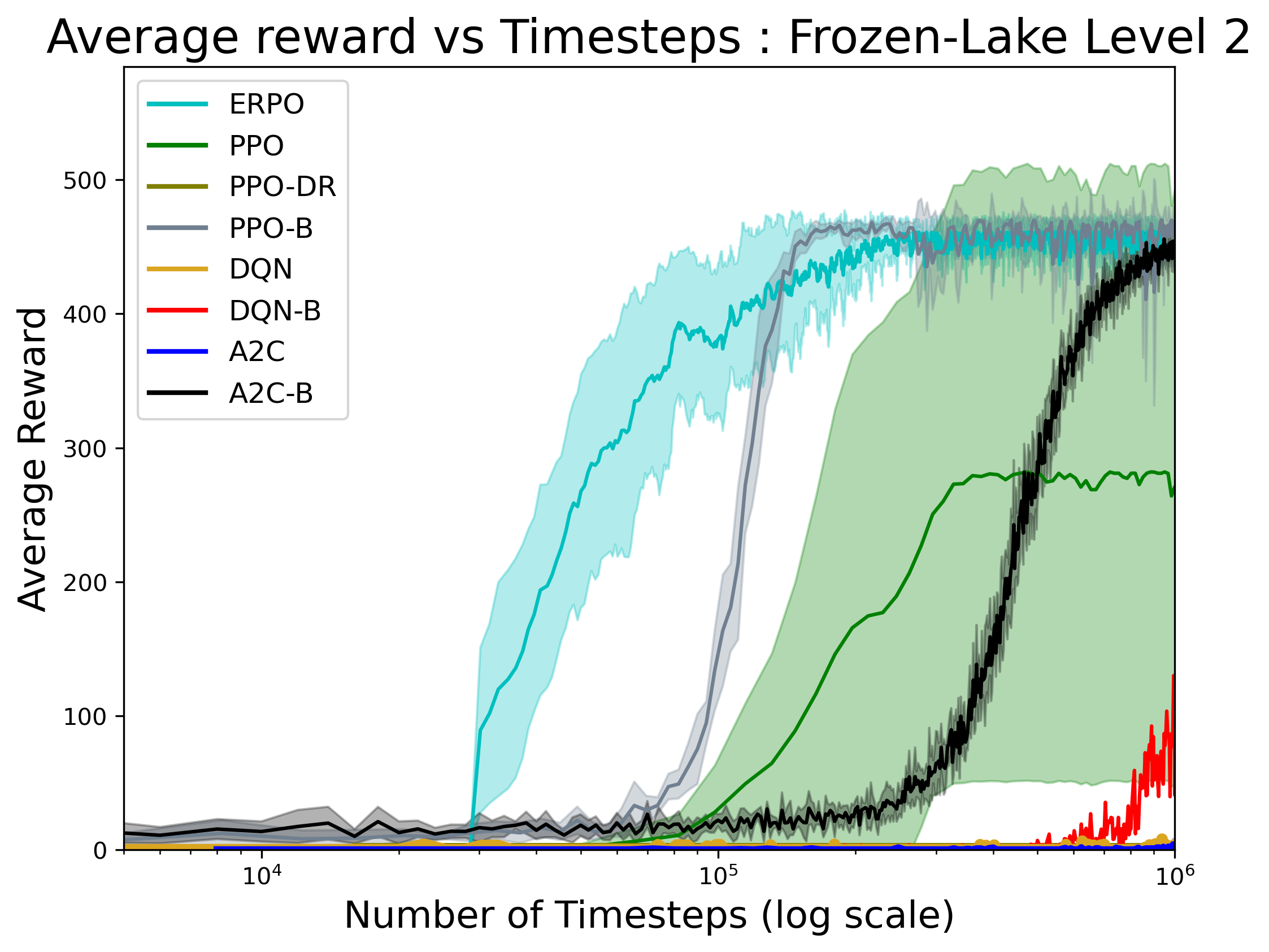}
        \caption{Level 2}
        \label{fig:fl_l2}
    \end{subfigure}
    \begin{subfigure}{.32\textwidth}
        \centering
        \includegraphics[height=3.5cm]{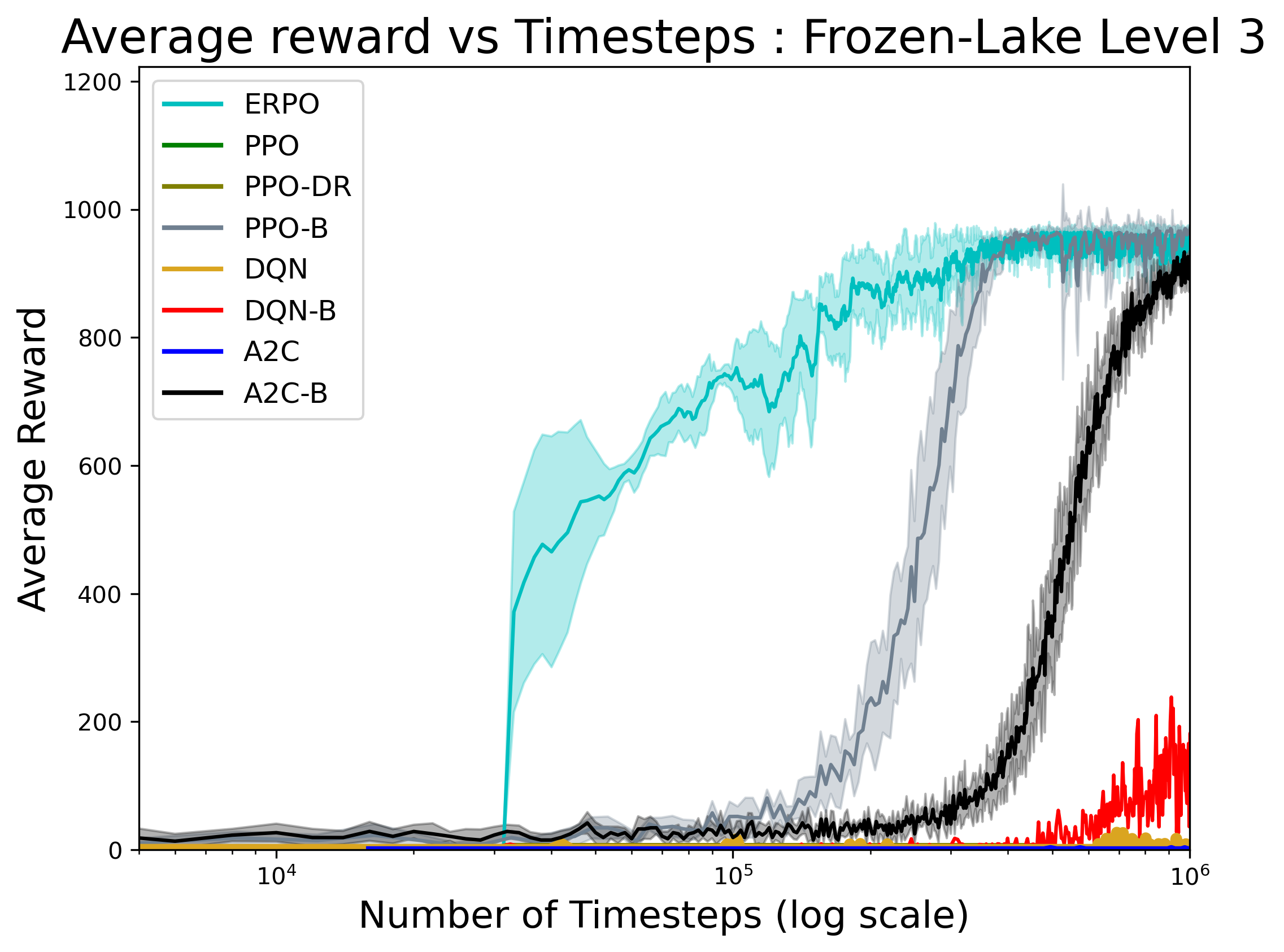}
        \caption{Level 3}
        \label{fig:fl_l3}
    \end{subfigure}
    \vspace{3mm}
\caption{Frozen-Lake Environments: (See Fig. 3) The agent (elf) is given 500 steps to reach the goal ($r = 0$ for each step) and given a reward $r = 500 - t$ upon reaching the goal (where $t$ is the current timestep). ERPO 
consistently maintains a strong performance, particularly at higher levels. PPO-DR and A2-C B demonstrate robustness. }
\vspace{2mm}
\label{fig:fl_results}
\end{figure*}

\begin{figure*}[ht!]
\centering
    \begin{subfigure}{.32\textwidth}
        \centering
        \includegraphics[height=3.5cm]{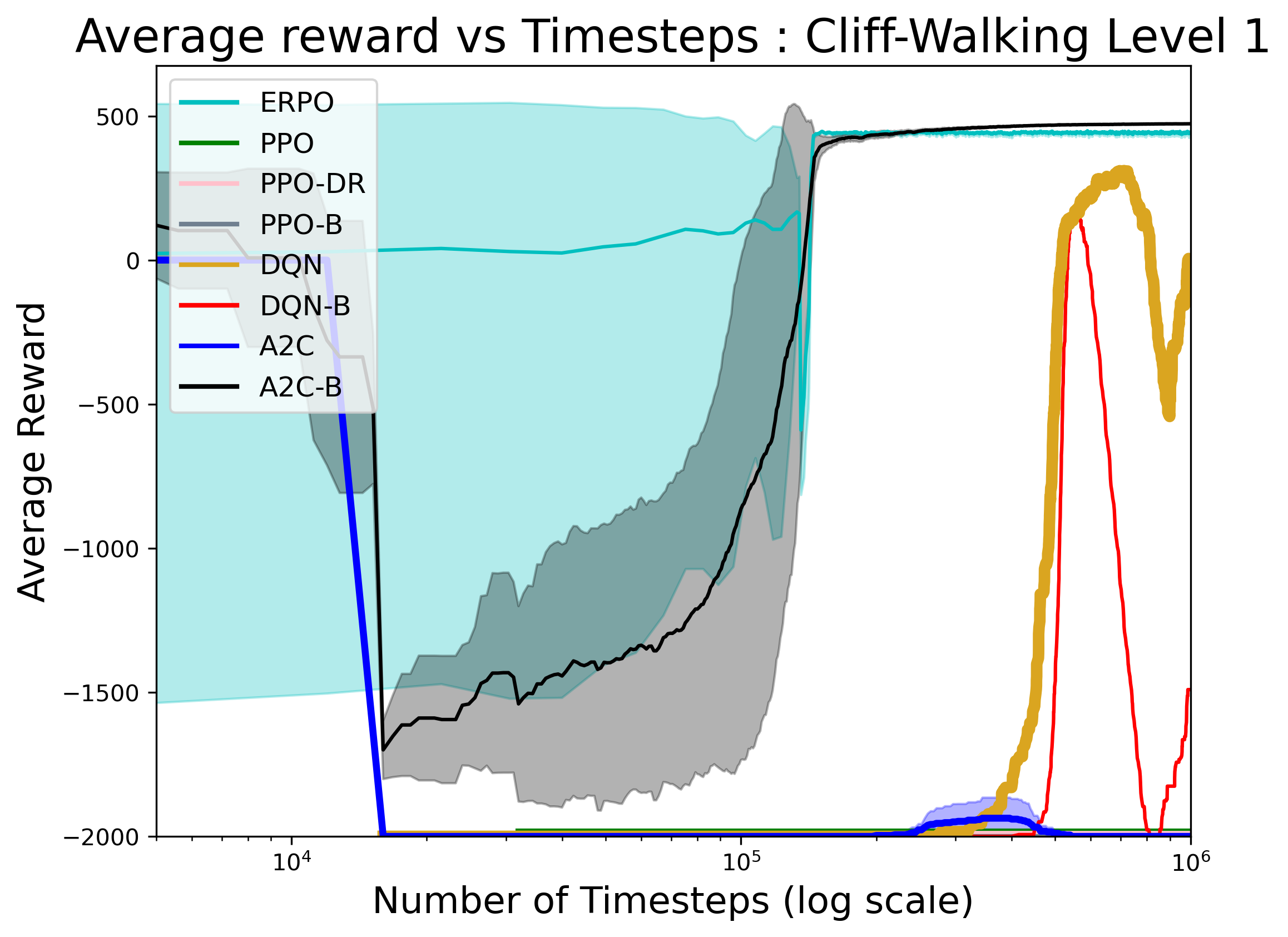}
        \caption{Level 1}
        \label{fig:cw_l1}
    \end{subfigure}
    \begin{subfigure}{.32\textwidth}
        \centering
        \includegraphics[height=3.5cm]{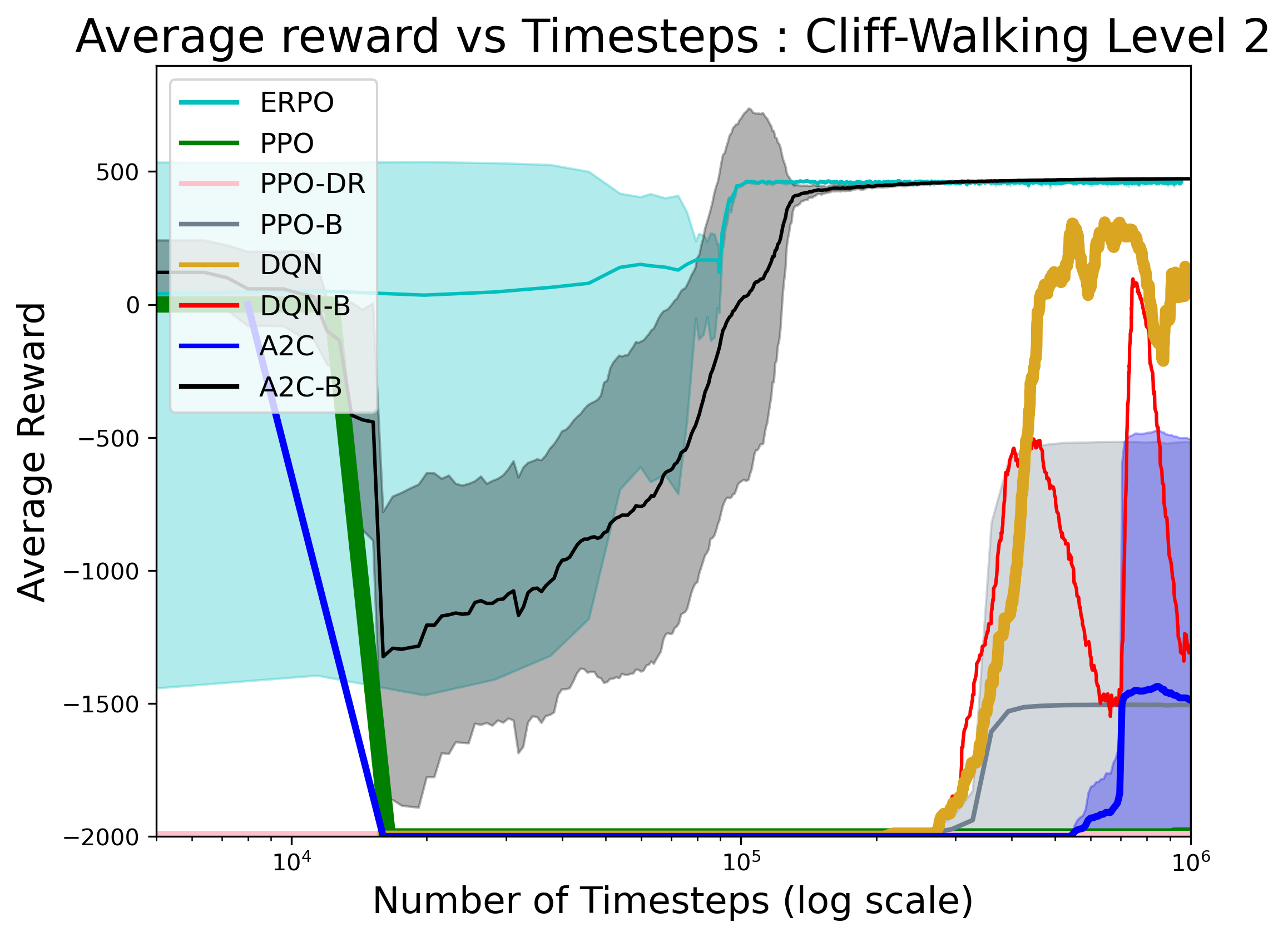}
        \caption{Level 2}
        \label{fig:cw_l2}
    \end{subfigure}
    \begin{subfigure}{.32\textwidth}
        \centering
        \includegraphics[height=3.5cm]{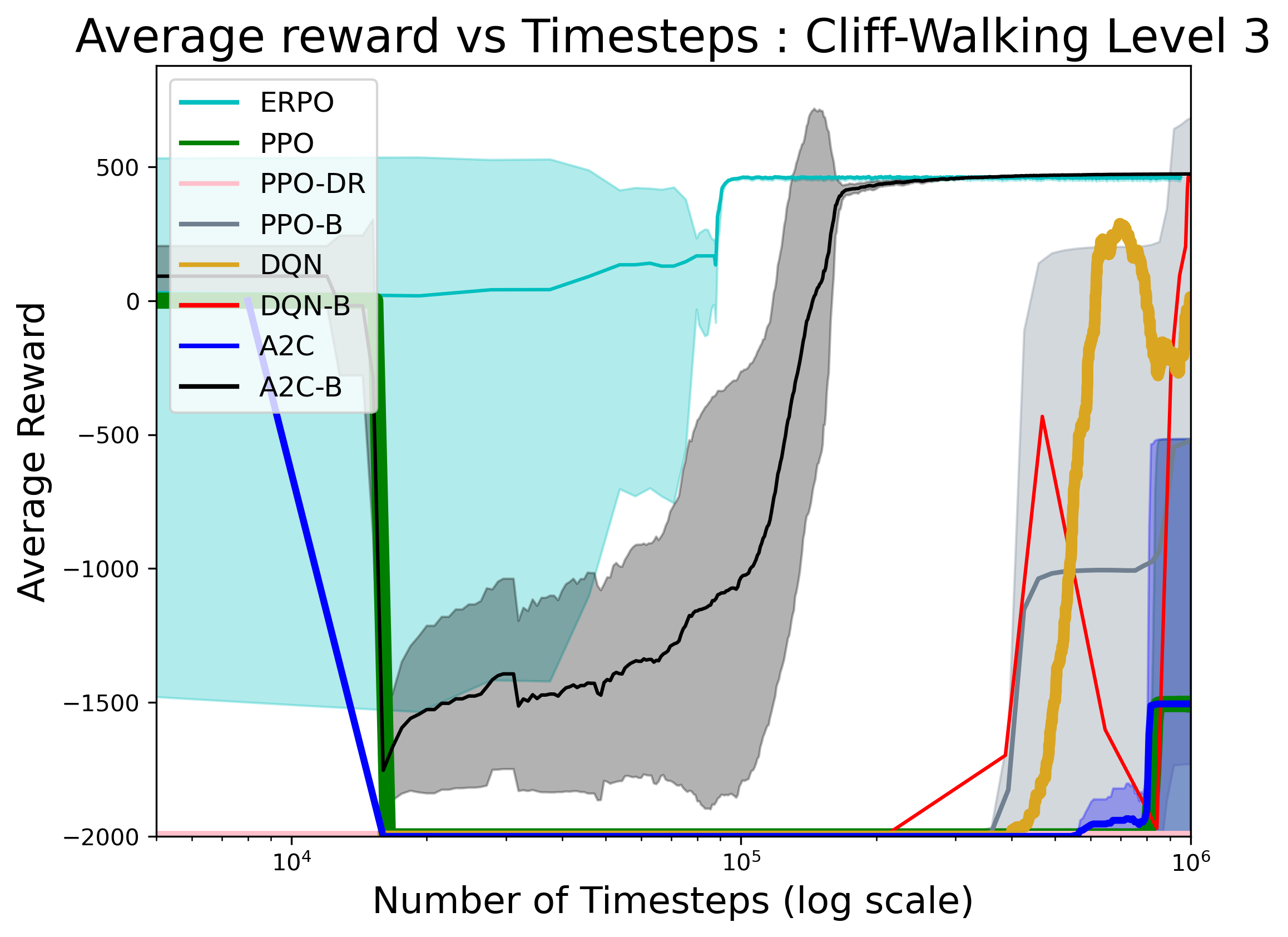}
        \caption{Level 3}
        \label{fig:cw_l3}
    \end{subfigure}
        \vspace{3mm}
\caption{Cliff-Walking Environments: (See Fig. 4) The agent is allowed 2000 steps to get from the start to the goal ($r = -1$ for each step) and given a reward $r = + 500$ for reaching the goal. ERPO  outperforms other methods, at higher levels. Notably, baselines trained over pre-learned models tend to fluctuate, indicating challenges in adapting to new environments. A2C-B comes close and outperforms ERPO in Level 1.}
\vspace{2mm}
\label{fig:cw_results}
\end{figure*}

We present the results of each environment for ERPO and the other baseline algorithms. We note that the performance of ERPO does not vary much with increasing levels of 
difficulty, even when the new environment is drastically different and much more difficult to navigate than the base environment, while the other algorithms suffer.

\mypara{Comparison with models trained from scratch and Domain Randomization:}
 ERPO significantly outperforms the other algorithms in terms of timesteps required for convergence. PPO-DR and A2C are the closest competitors, yet they still require up to an order of magnitude more timesteps than ERPO in the Walls\&Lava environment. \footnote{A2C shows results later than the other algorithms in the Walls\&Lava environment because of large batch size over 8 environments, so results are indicated only after batch\_size $\times$ 8 timesteps.} The results from the Taxi environment indicate that PPO-DR has the closest performance to ERPO but still takes longer to converge. In the modified CliffWalking environments, most algorithms struggle to converge, highlighting the increased difficulty due to the larger cliff area and being returned to the starting position for entering it. The FrozenLake environment presents a more favorable scenario for A2C and PPO, but they worsen significantly as the levels increase, empirically demonstrating that ERPO adapts better.

\begin{figure*}[ht!]
\centering
    \begin{subfigure}{.32\textwidth}
        \centering
        \includegraphics[height=3.5cm]{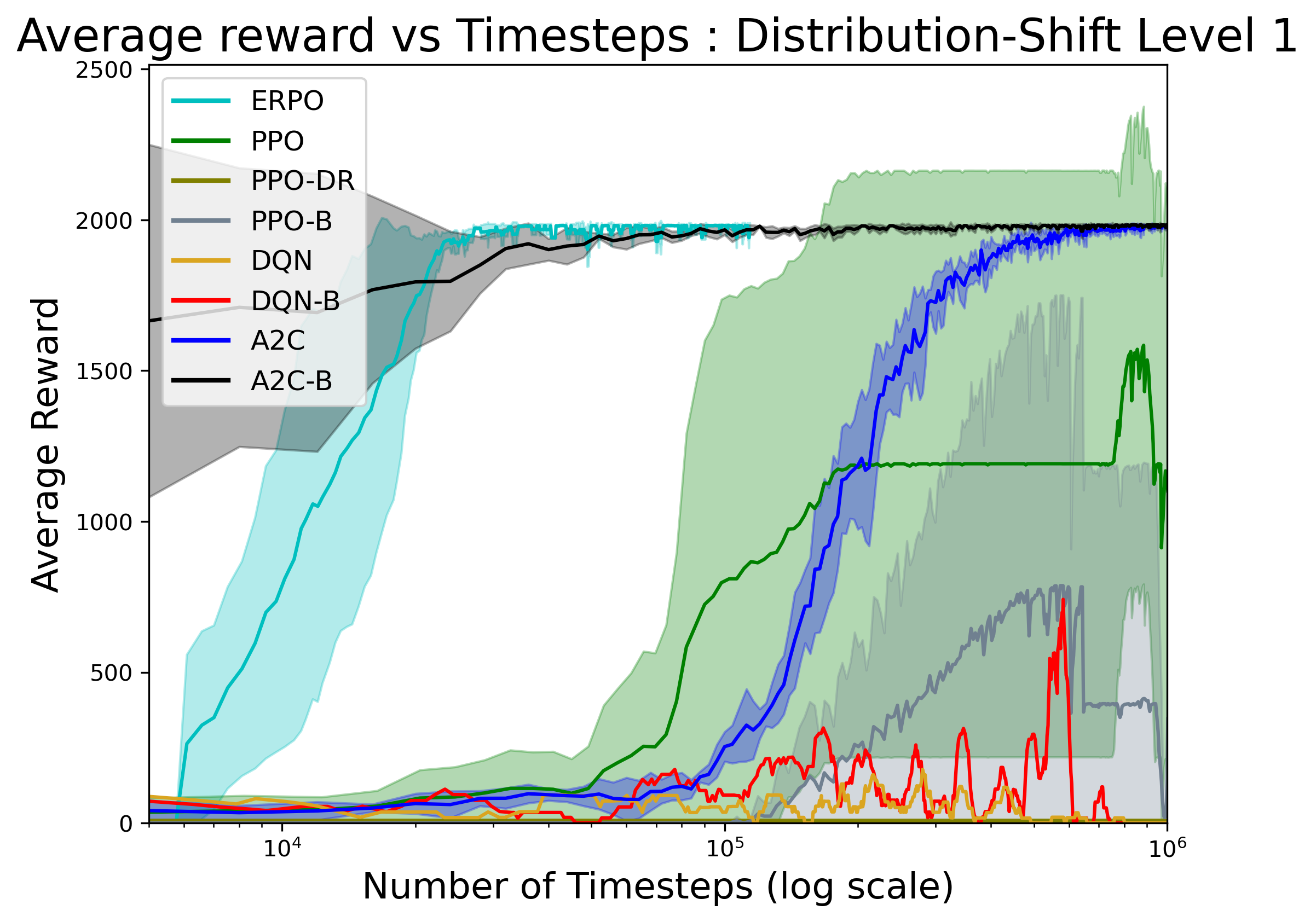}
        \caption{Level 1}
        \label{fig:ds_l1}
    \end{subfigure}
    \begin{subfigure}{.32\textwidth}
        \centering
        \includegraphics[height=3.5cm]{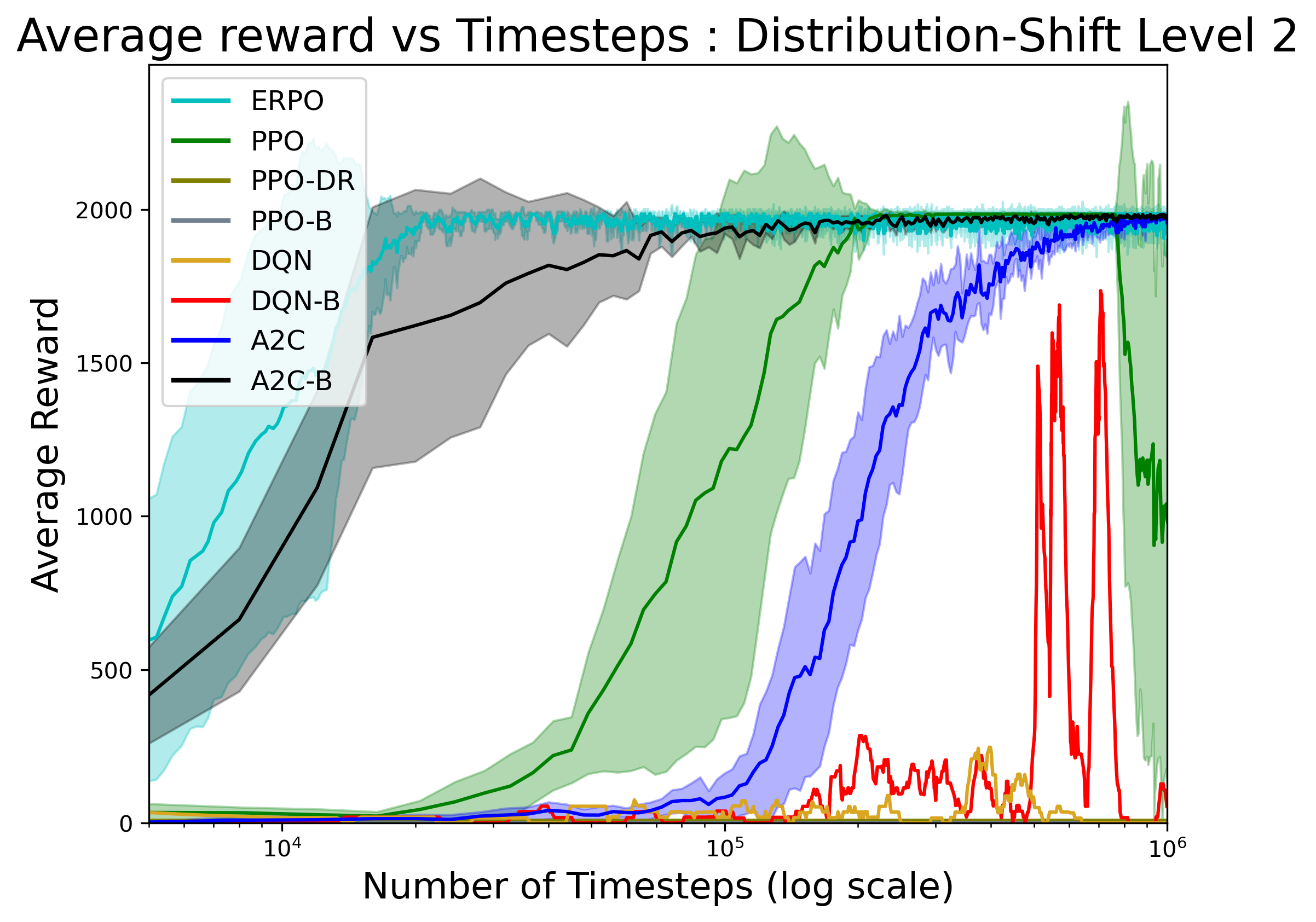}
        \caption{Level 2}
        \label{fig:ds_l2}
    \end{subfigure}
    \begin{subfigure}{.32\textwidth}
        \centering
        \includegraphics[height=3.5cm]{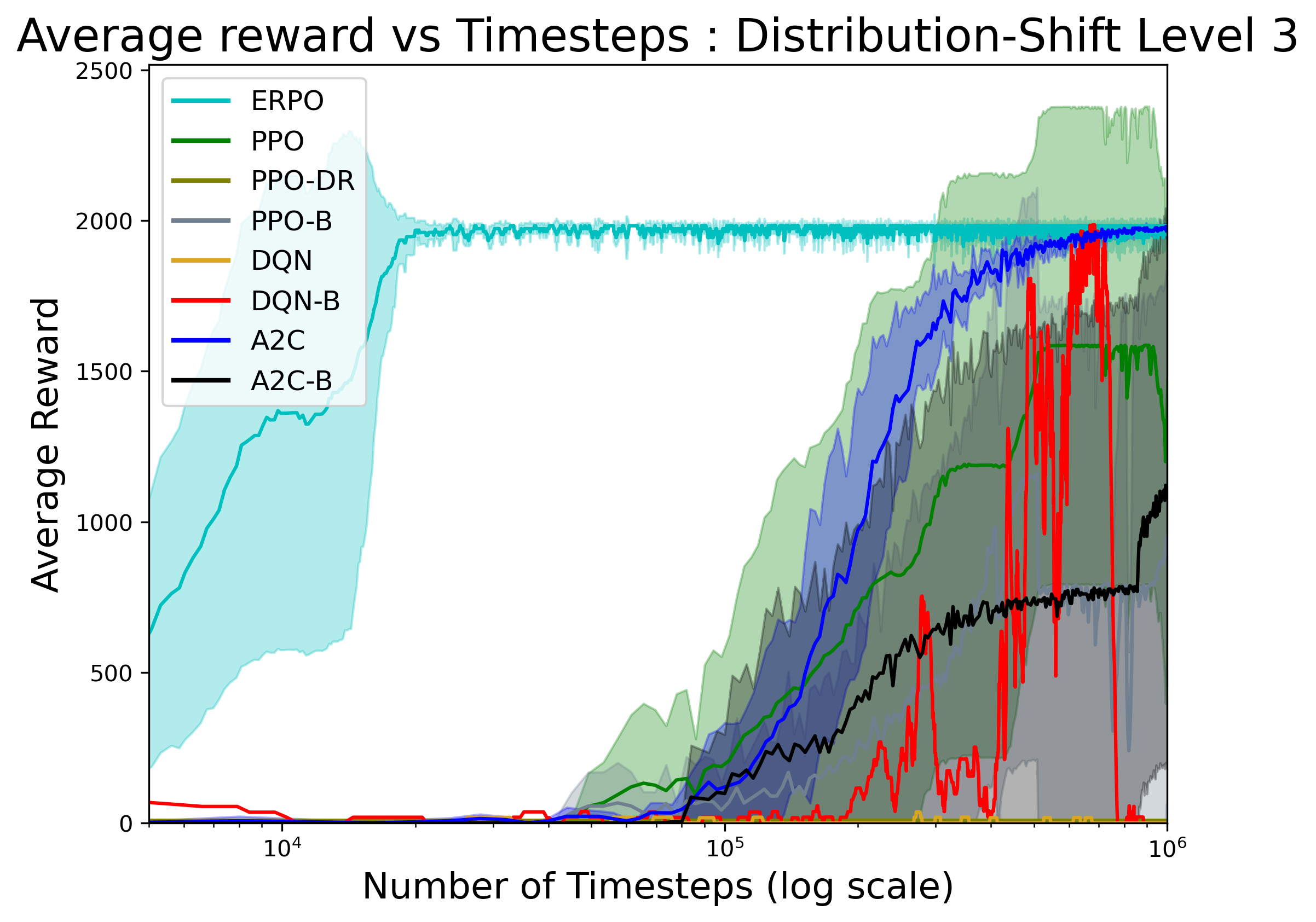}
        \caption{Level 3}
        \label{fig:ds_l3}
    \end{subfigure}
        \vspace{3mm}
\caption{Distribution-Shift Environments:  (See Fig. 5) The agent has to navigate a gridworld of increasing complexity with lava. It has 2000 timesteps to reach the goal ($r = 0$ for each step) and $r = 2000 - t$ upon reaching the goal (where $t$ is the current timestep). ERPO outperforms other methods while A2C-B comes close. Besides that PPO and A2C perform well too. The increased environmental complexity from Level 1 to Level 3 is evident, with all models facing greater challenges as the level increases.}
\label{fig:ds_results}
\end{figure*}

\begin{figure*}[ht!]
\centering
    \begin{subfigure}{.32\textwidth}
        \centering
        \includegraphics[height=3.5cm]{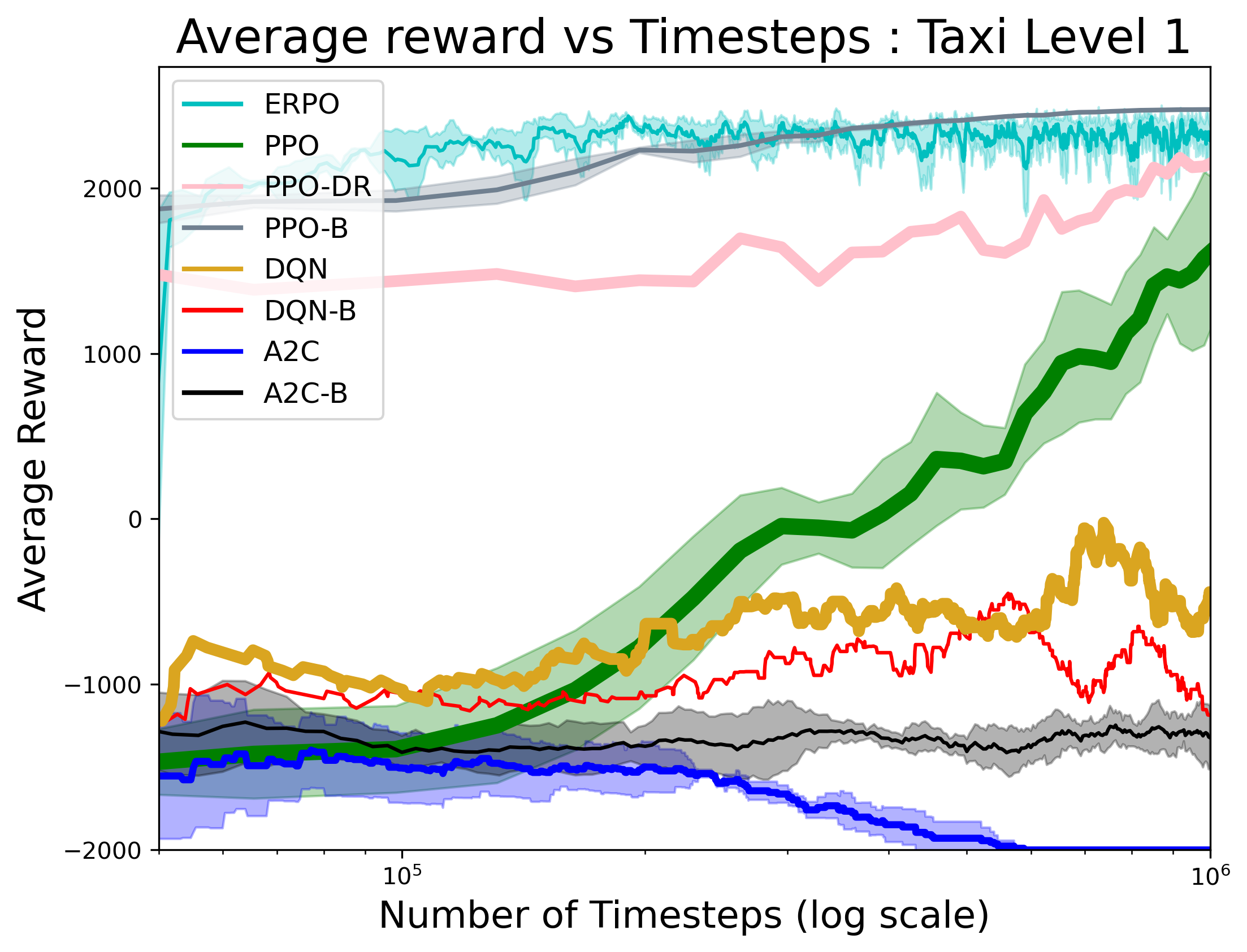}
        \caption{Level 1}
        \label{fig:taxi_l1}
    \end{subfigure}
    \begin{subfigure}{.32\textwidth}
        \centering
        \includegraphics[height=3.5cm]{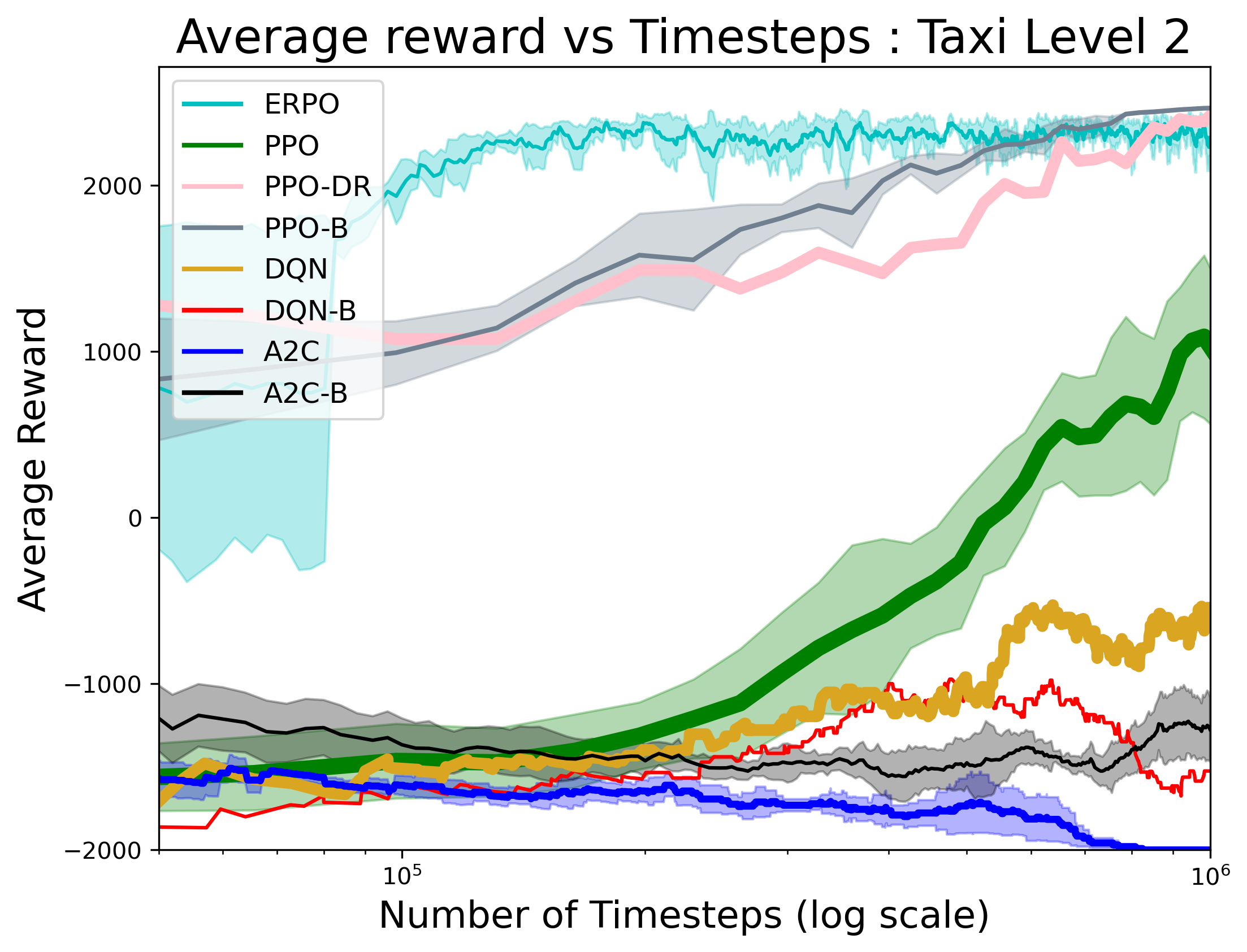}
        \caption{Level 2}
        \label{fig:taxi_l2}
    \end{subfigure}
    \begin{subfigure}{.32\textwidth}
        \centering
        \includegraphics[height=3.5cm]{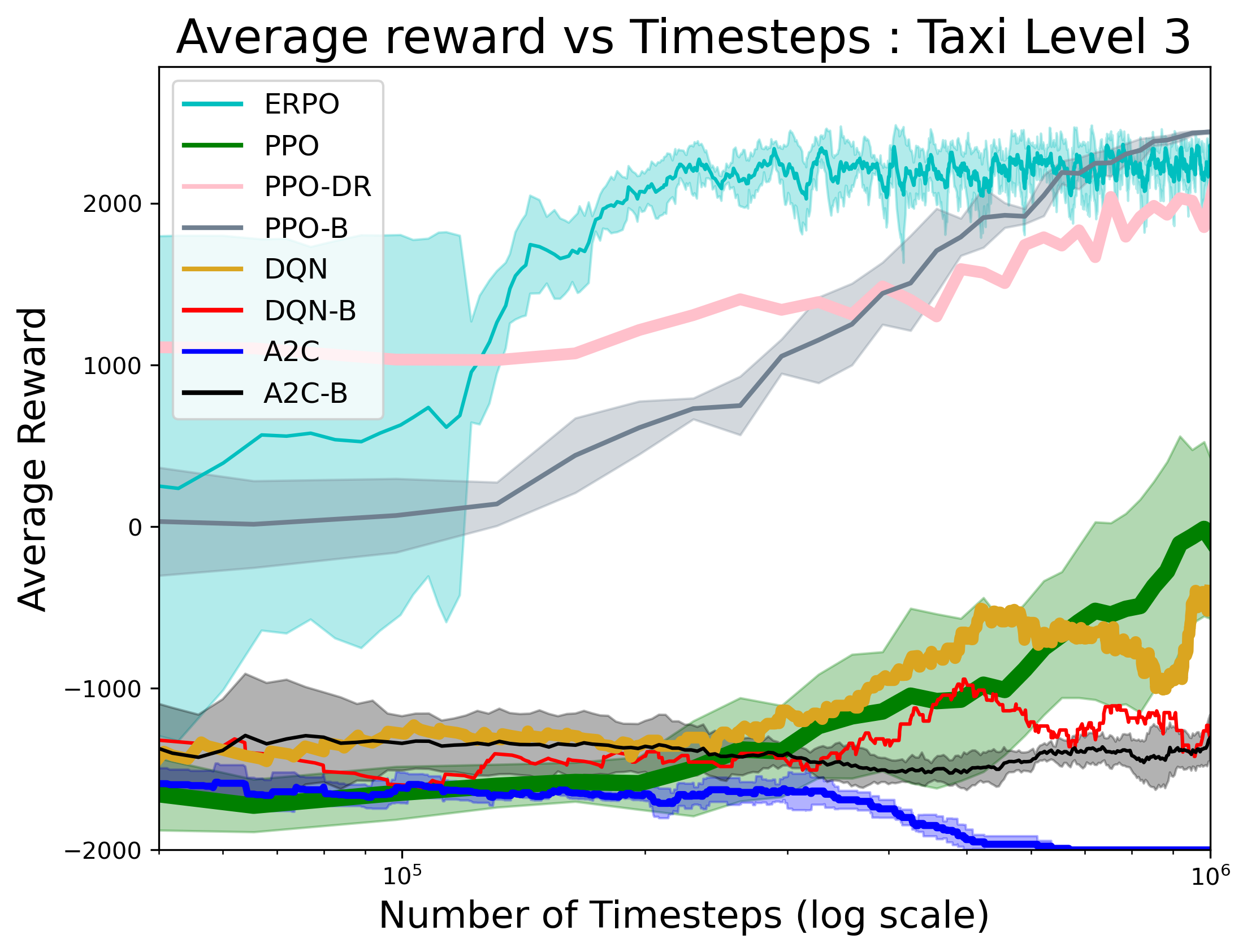}
        \caption{Level 3}
        \label{fig:taxi_l3}
    \end{subfigure}
        \vspace{3mm}
\caption{Taxi Environments: (See Fig. 7) The agent (taxi) is allowed to take 2000 steps in an episode ($r = -1$ for each step), and gets a reward $r = +2500$ for correct pick up and drop. ERPO outperforms other algorithms with PPO-B and PPO-DR models showing good results.}
\vspace{2mm}
\label{fig:taxi_results}
\end{figure*}

\mypara{Comparison with models trained over pre-trained (base) models:}
In the distribution-shifted environment, the models trained over baselines perform slightly better than their counterparts trained from scratch. In the Taxi environment, PPO-B shosw relatively close performance to ERPO. The large cliff area and long episodes in the CliffWalking environment prove challenging, with most algorithms failing to converge. However, A2C-B performs nearly as well as ERPO, though it takes slightly longer to achieve convergence. The FrozenLake environment shows that PPO-B and A2C-B perform competently.

\mypara{Benchmarking against Heuristic Search Methods:}
In addition to comparing learning algorithms, we benchmark baseline heuristic search algorithms, specifically $A^*$ and $IDA^*$, in the FrozenLake and CliffWalking environments. These comparisons use Manhattan distance as the heuristic, with other costs aligned to the previously described reward structures. It is important to note that these comparisons serve as baselines, not direct competitors, as the information available to these algorithms differs, making a like-for-like comparison unfair.

In the FrozenLake environment, $A^*$ performs poorly due to its inability to account for proximity to holes, resulting in significant penalties. The cost incurred by $A^*$ ranges from -$5K$ to -$15K$ across different versions of FrozenLake. Conversely, $IDA^*$ performs well on Level 1 with a reward of approximately $2K$, but declines to -$5K$ on Level 3.

In the CliffWalking environment, $IDA^*$ consistently achieves rewards of approximately $2K$ across all levels, though this success requires the starting node to be positioned near the cliff's edge. In contrast, $A^*$ struggles, with rewards ranging from -200 to -300.

While vanilla heuristic search methods may underperform in stochastic environments, methods like stochastic $A^*$ might be more suitable. Additionally, extending these methods to continuous spaces via function approximators is challenging and requires an admissible heuristic. A key difference is that ERPO and other learning-based methods generate policies that can generalize to any start location within the grid, whereas heuristic-based approaches may need to restart the search when encountering previously unexplored states.

\mypara{Preliminary Results on custom Environments:}
In addition to the results on standard gym environments, we also conducted experiments in custom gym environments with a similar reach-avoid mission. The agents in these environments are assigned randomly to one of many pre-determined start locations, and must reach one of the goal locations whilst avoiding obstacles. 
One sample result for a 100x100 grid world with ~20\% new obstacles is as follows:
\vspace{-5pt}
\begin{table}[h!]
    \centering
    \begin{tabular}{lcccc}
        \toprule
        Algorithm & A* & Q-learning & PPO & ERPO \\
        \midrule
        Path Length & 118.41 & 118.00 & 122.43 & 116.05 \\
        \bottomrule
    \end{tabular}
    \caption{Comparison of Algorithm Performance}
    \label{tab:algorithm_performance}
\end{table}
\vspace{-3pt}
We leave the extension to larger custom grids with varying levels of obstacle density for future work.

\mypara{Analysis:} 
Our observations indicate a distinct advantage of ERPO over traditional reinforcement learning algorithms even when trained over the pre-trained models, and domain randomization methods. While PPO, PPO-DR, and A2C utilize batch-wise updates, and DQN depends on episode-wise updates, these algorithms generally treat each step within a batch or episode as equally significant for the purpose of policy updates. This approach can dilute the impact of particularly successful or unsuccessful trajectories on the overall learning process.
In contrast, ERPO places emphasis on trajectories that significantly deviate from the norm — either by outperforming or underperforming compared to the rest of the batch and prioritizes learning from those that are the most informative. This selective update mechanism ensures that ERPO rapidly identifies and leverages the most effective strategies.
As a result of this approach, the fittest trajectories become increasingly predominant in the batch over just a few training episodes. Thus, ERPO leads to a faster and more efficient convergence towards optimal policies.

\vspace{-5pt}
\section{Discussion}
\vspace{-5pt}
\mypara{Limitations and Future Work:}
Our set up is limited to discrete state-action spaces. We are working on an extension that works with continuous spaces. This will be carried out with function approximation using radial basis functions that also update the policies of states within a certain distance of the state we are updating. Additionally, because we normalize the probability distribution across actions of a given state, a continuous model would work instead along with a probability density function that can be updated using Dirac delta functions. Our set up is also limited to single agent models (unless extended with independent learning). We are working on extensions that can combine other game-theoretic solution concepts for cooperative multi-agent learning.

\mypara{Conclusion:} This paper presents a new approach to incrementally adapt the optimal policy of an autonomous agent in an environment that experiences large distirbution shifts in the environment dynamics. Our algorithm uses principles from evolutionary game theory (EGT) to adapt the policy and our policy update can be viewed as a version of replicator dynamics used in EGT. We provide theoretical convergence guarantees for our algorithm and empirically demonstrate that it outperforms several popular RL algorithms, both when the algorithms are warm-started with the old optimal policy, and when they are 
re-trained from scratch.

\end{document}